\providecommand{\algorithmname}{Algorithm}
\theoremstyle{plain}
\newtheorem{thm}{\protect\theoremname}
\theoremstyle{definition}
\newtheorem{defn}[thm]{\protect\definitionname}
\theoremstyle{plain}
\newtheorem{lem}[thm]{\protect\lemmaname}
\theoremstyle{plain}
\newtheorem{cor}[thm]{\protect\corollaryname}
\newenvironment{proof}[1][\protect\proofname]{\par
\normalfont\topsep6\p@\@plus6\p@\relax
\trivlist
\itemindent\parindent
\item[\hskip\labelsep\scshape #1]\ignorespaces
}{%
\endtrivlist\@endpefalse
}
\providecommand{\proofname}{Proof}
\setlist[itemize]{noitemsep}
\providecommand{\corollaryname}{Corollary}
\providecommand{\definitionname}{Definition}
\providecommand{\lemmaname}{Lemma}
\providecommand{\theoremname}{Theorem}
\setlist[itemize]{noitemsep}
\title{A Non-convex One-Pass Framework for Generalized Factorization Machine and Rank-One Matrix Sensing}
\author[1]{Ming Lin\thanks{linmin@umich.edu}}
\author[1]{Jieping Ye\thanks{jpye@umich.edu}}
\affil[1]{Department of Computational Medicine and Bioinformatic\\
  University of Michigan, Ann Arbor, MI 48109}
\date{August 21, 2016}
\begin{document}

\maketitle

\begin{abstract}
We develop an efficient alternating framework for learning a generalized version of Factorization
Machine (gFM) on steaming data with provable guarantees. When the instances
are sampled from $d$ dimensional random Gaussian vectors and the target second order coefficient matrix in
gFM is of rank $k$, our algorithm converges linearly, achieves $O(\epsilon)$
recovery error after retrieving $O(k^{3}d\log(1/\epsilon))$ training
instances, consumes $O(kd)$ memory in one-pass of dataset and only
requires matrix-vector product operations in each iteration. The key
ingredient of our framework is a construction of an estimation sequence
	endowed with a so-called Conditionally Independent RIP condition (CI-RIP).
As special cases of gFM, our framework can be applied to symmetric
or asymmetric rank-one matrix sensing problems, such as inductive
matrix completion and phase retrieval.
\end{abstract}

\section{Introduction}

Linear models are one of the foundations of modern machine learning
due to their strong learning guarantees and efficient solvers \citep{koltchinskii_oracle_2011}.
Conventionally linear models only consider the first order information
of the input feature which limits their capacity in non-linear problems.
Among various efforts extending linear models to the non-linear domain,
the Factorization Machine \citep{rendle_factorization_2010} (FM)
captures the second order information by modeling the pairwise feature
interaction in regression under low-rank constraints. FMs have been
found successful in many applications, such as recommendation systems
\citep{rendle_fast_2011} and text retrieval \citep{hong_co-factorization_2013}.
In this paper, we consider a generalized version of FM called gFM which removes several redundant constraints in the original FM 
such as positive semi-definite and zero-diagonal, leading to a more general model without sacrificing its learning ability.
From theoretical side, the gFM includes rank-one matrix sensing \citep{zhong_efficient_2015,chen_exact_2015,cai_rop_2015,kueng_low_2014}
as a special case, where the latter one has been studied widely in
context such as inductive matrix completion \citep{jain_provable_2013}
and phase retrieval \citep{candes_phase_2011}.

Despite of the popularity of FMs in industry, there is rare theoretical
study of learning guarantees for FMs. One of the main challenges in
developing a provable FM algorithm is to handle its symmetric rank-one matrix
sensing operator. For conventional matrix sensing problems where the matrix sensing operator is
RIP, there are several alternating
methods with provable guarantees \citep{moritz_hardt_understanding_2013,jain_low-rank_2013,hardt_fast_2014,zhao_nonconvex_2015,zhao_nonconvex_2015-1}.
However, for a symmetric rank-one matrix sensing operator, the RIP condition doesn't hold trivially which
turns out to be the main difficulty in designing efficient provable FM solvers.

In rank-one matrix sensing, when the sensing operator is asymmetric,
the problem is also known as inductive matrix completion which can
be solved via alternating minimization with a global linear convergence
rate \citep{jain_provable_2013,zhong_efficient_2015}. For symmetric
rank-one matrix sensing operators, we are not aware of any efficient
solver by the time of writing this paper. In a special case when the
target matrix is of rank one, the problem is called ``phase retrieval''
whose convex solver is first proposed by \citet{candes_phase_2011}
then alternating methods are provided in \citep{lee_near_2013,netrapalli_phase_2013}.
While the target matrix is of rank $k>1$ , only convex methods minimizing
the trace norm have been proposed recently, which are computationally
expensive \citep{kueng_low_2014,cai_rop_2015,chen_exact_2015,davenport_overview_2016}.

Despite of the above fundamental challenges, extending rank-one matrix
sensing algorithm to gFM itself is difficult. Please refer to Section
\ref{sub:FMs-and-Rank-One-matrix-sensing} for an in-depth discussion.
The main difficulty is due to the first order term in the gFM formulation,
which cannot be trivially converted to a standard matrix sensing problem.

In this paper, we develop a unified theoretical framework and an efficient
solver for generalized Factorization Machine and its special cases such as rank-one matrix sensing,
either symmetric or asymmetric. The key ingredient is to show that
the sensing operator in gFM satisfies a so-called Conditionally Independent
RIP condition (CI-RIP, see Definition \ref{def:RIP}) . Then we can
construct an estimation sequence via noisy power iteration \citep{hardt_noisy_2013}.
Unlike previous approaches, our method does not require alternating
minimization or choosing the step-size as in alternating gradient
descent. The proposed method works on steaming data, converges linearly
and has $O(kd)$ space complexity for a $d$-dimension rank-$k$ gFM
model. The solver achieves $O(\epsilon)$ recovery error after retrieving
$O(k^{3}d\log(1/\epsilon))$ training instances.

The remainder of this paper is organized as following. In Section
2, we introduce necessary notation and background of gFM. Subsection
\ref{sub:FMs-and-Rank-One-matrix-sensing} investigates several fundamental
challenges in depth. Section 3 presents our algorithm, called
One-Pass gFM, followed by its theoretical guarantees.
Our analysis framework is presented in Section 4. Section 5 concludes
this paper.

\section{Generalized Factorization Machine (gFM)}

In this section, we first introduce necessary notation and background
of FM and its generalized version gFM. Then in Subsection \ref{sub:FMs-and-Rank-One-matrix-sensing},
we reveal the connection between gFM and rank-one matrix sensing
followed by several fundamental challenges encountered when applying
frameworks of rank-one matrix sensing to gFM.

The FM predicts the labels of instances by not only their features
but also high order interactions between features. In the following,
we focus on the second order FM due to its popularity. Suppose we
are given $N$ training instances $\boldsymbol{x}_{i}\in\mathbb{R}^{d}$
independently and identically (I.I.D.) sampled from the standard Gaussian
distribution and so are their associated labels $y_{i}\in\mathbb{R}$.
Denote the feature matrix $X=[\boldsymbol{x}_{1},\boldsymbol{x}_{2},\cdots,\boldsymbol{x}_{n}]\in\mathbb{R}^{d\times n}$
and the label vector $\boldsymbol{y}=[y_{1},y_{2},\cdots,y_{n}]{}^{\top}\in\mathbb{R}^{n}$
. In second order FM, $y_{i}$ is assumed to be generated from a
target vector $\boldsymbol{w}^{*}\in\mathbb{R}^{d}$ and a target
rank $k$ matrix $M^{*}\in\mathbb{R}^{d\times d}$ satisfying
\begin{align}
y_{i}= & \boldsymbol{x}_{i}{}^{\top}\boldsymbol{w}^{*}+\boldsymbol{x}_{i}{}^{\top}M^{*}\boldsymbol{x}_{i}+\xi_{i}\label{eq:y_i=00003Dx_iw*+x_iM*x_i+e_i}
\end{align}
 where $\xi_{i}$ is a random subgaussian noise with proxy variance
$\xi^{2}$ . It is often more convenient to write Eq. (\ref{eq:y_i=00003Dx_iw*+x_iM*x_i+e_i})
in matrix form. Denote the linear operator $\mathcal{A}:\mathbb{R}^{d\times d}\rightarrow\mathbb{R}^{n}$
as $\mathcal{A}(M)\triangleq[\left\langle A_{1},M\right\rangle ,\left\langle A_{2},M\right\rangle ,\cdots,\left\langle A_{n},M\right\rangle ]{}^{\top}$
where $A_{i}=\boldsymbol{x}_{i}\boldsymbol{x}_{i}{}^{\top}$ . Then
Eq. (\ref{eq:y_i=00003Dx_iw*+x_iM*x_i+e_i}) has a compact form: 
\begin{align}
\boldsymbol{y}=X{}^{\top}\boldsymbol{w}^{*}+ & \mathcal{A}(M^{*})+\boldsymbol{\xi}\ .\label{eq:operator-form-of-FM}
\end{align}

The FM model given by Eq. (\ref{eq:operator-form-of-FM}) consists
of two components: the first order component $X{}^{\top}\boldsymbol{w}^{*}$
and the second order component $\mathcal{A}(M^{*})$. The component
$\mathcal{A}(M^{*})$ is a symmetric rank-one Gaussian measurement
since $\mathcal{A}_{i}(M)=\boldsymbol{x}_{i}{}^{\top}M\boldsymbol{x}_{i}$
where the left/right design vectors ($\boldsymbol{x}_{i}$ and $\boldsymbol{x}_{i}{}^{\top}$)
are identical. The original FM requires that $M^*$ should be positive semi-definite and the diagonal
elements of $M^*$ should be zero. However our analysis shows that both constraints are redundant for 
learning Eq. \ref{eq:operator-form-of-FM}. Therefore in this paper we consider a generalized version 
of FM which we call gFM where $M^*$ is only required to be symmetric and low rank. To make the recovery
of $M^{*}$ well defined, it is necessary to assume
$M^{*}$ to be symmetric. Indeed for any asymmetric matrix $M^{*}$,
there is always a symmetric matrix $M_{\mathrm{sym}}^{*}=(M^{*}+M^{*}{}^{\top})/2$
such that $\mathcal{A}(M^{*})=\mathcal{A}(M_{\mathrm{sym}}^{*})$
thus the symmetric constraint does not affect the model. Another standard
assumption in rank-one matrix sensing is that the rank of $M^{*}$
should be no more than $k$ for $k\ll d$. When $\boldsymbol{w}^{*}=0$, gFM is equal to the symmetric rank-one matrix sensing problem. 
Recent researches have proposed several convex programming methods based on the trace norm minimization to recover $M^{*}$ with a sampling complexity on order
of $O(k^{3}d)$ \citep{candes_phase_2011,cai_rop_2015,kueng_low_2014,chen_exact_2015,zhong_efficient_2015}.
Some authors also call gFM as second order polynomial network \citep{blondel_polynomial_2016}.

When $d$ is much larger than $k$, the convex programming on the trace
norm or nuclear norm of $M^{*}$ becomes difficult since $M^{*}$ can be a
 $d\times d$ dense matrix. Although modern convex solvers can scale to large $d$ with reasonable computational cost, 
 a more popular strategy to efficiently estimate $\boldsymbol{w}^{*}$
and $M^{*}$ is to decompose $M^{*}$ as $UV{}^{\top}$ for some $U,V\in\mathbb{R}^{d\times k}$,
then alternatively update $\boldsymbol{w}$, $U,V$ to minimize the
empirical loss function 
\begin{align}
\min_{\boldsymbol{w},U,V}\  & \frac{1}{2N}\|\boldsymbol{y}-X{}^{\top}\boldsymbol{w}-\mathcal{A}(UV{}^{\top})\|_{2}^{2}\ .\label{eq:least-square-loss-FM}
\end{align}
 The loss function in Eq. (\ref{eq:least-square-loss-FM}) is non-convex.
It is even unclear whether an estimator of the optimal solution $\{\boldsymbol{w}^{*},M^{*}\}$
of Eq. (\ref{eq:least-square-loss-FM}) with a polynomial time complexity
exists or not.

In our analysis, we denote $M+O(\epsilon)$ as a matrix $M$ plus
a perturbation matrix whose spectral norm is bounded by $\epsilon$.
We use $\|\cdot\|_{2}$ , $\|\cdot\|_{F}$ , $\|\cdot\|_{*}$ to denote
the matrix spectral norm, Frobenius norm and nuclear norm respectively.
To abbreviate the high probability bound, we denote $C=\mathrm{polylog(d,n,T,1/\eta)}$
to be a constant polynomial logarithmic  in $\{d,n,T,1/\eta\}$. The
eigenvalue decomposition of $M^{*}$ is $M^{*}=U^{*}\Lambda^{*}U^{*}{}^{\top}$
where $U^{*}\in\mathbb{R}^{d\times k}$ is the top-$k$ eigenvectors
of $M^{*}$ and $\Lambda^{*}=\mathrm{diag}(\lambda_{1}^{*},\lambda_{2}^{*},\cdots,\lambda_{k}^{*})$
are the corresponding eigenvalues sorted by $|\lambda_i| \geq |\lambda_{i+1}|$. Let $\sigma_{i}^{*}=|\lambda_{i}^{*}|$
 denote the singular value of $M^*$ and $\sigma_{i}\{M\}$ be the $i$-th largest singular value of $M$.
$U_{\perp}^{*}$ denotes an matrix whose columns are the orthogonal basis of the complementary subspace of $U^{*}$.

\subsection{gFM and Rank-One Matrix Sensing \label{sub:FMs-and-Rank-One-matrix-sensing}}

When $\boldsymbol{w}^{*}=0$ in Eq. (\ref{eq:y_i=00003Dx_iw*+x_iM*x_i+e_i}),
the gFM becomes the symmetric rank-one matrix sensing problem. While the
recovery ability of rank-one matrix sensing is somehow provable recently
despite of the computational issue, it is not the case for gFM. It
is therefore important to discuss the differences between gFM and
rank-one matrix sensing to give us a better understanding of the fundamental
barriers in developing provable gFM algorithm.

In the rank-one matrix sensing problem, a relaxed setting is to assume
that the sensing operator is asymmetric, which is defined by $\mathcal{A}_{i}^{\mathrm{asy}}(M)=\boldsymbol{u}_{i}{}^{\top}M\boldsymbol{v}_{i}$
where $\boldsymbol{u}_{i}$ and $\boldsymbol{v_{i}}$ are independent
random vectors. Under this setting, the recovery ability of alternating
methods is provable \citep{jain_provable_2013}. However, existing
analyses cannot be generalized to their symmetric counterpart, since
$\boldsymbol{u}_{i}$ and $\boldsymbol{v}_{i}$ are not allowed to
be dependent in these frameworks. For example, the sensing operator
$\mathcal{A}^{\mathrm{asy}}(\cdot)$ is unbiased ( $E\mathcal{A}^{\mathrm{asy}}(\cdot)=0$)
but the symmetric sensing operator is clearly not \citep{cai_rop_2015}.
Therefore, the asymmetric setting oversimplifies the problem and loses
important structure information which is critical to gFM.

As for the symmetric rank-one matrix sensing operator, the state-of-the-art
estimator is based on the trace norm convex optimization \citep{tropp_convex_2014, chen_exact_2015,cai_rop_2015},
which is computationally expensive. When $\boldsymbol{w}^{*}\not=\boldsymbol{0}$, the
gFM has an extra perturbation term $X{}^{\top}\boldsymbol{w}^{*}$
. This first order perturbation term turns out to be a fundamental challenge in theoretical analysis. One might attempt
to merge $\boldsymbol{w}^{*}$ into $M^{*}$ in order to convert gFM
as a rank $(k+1)$ matrix sensing problem. For example, one may extend
the feature $\hat{\boldsymbol{x}}_{i}\triangleq[\boldsymbol{x}_{i},1]{}^{\top}$
and the matrix $\hat{M}^{*}=[M^{*};\boldsymbol{w}^{*}{}^{\top}]\in\mathbb{R}^{(d+1)\times d}$.
However, after this simple extension, the sensing operator becomes
$\hat{\mathcal{A}}(M^{*})=\hat{\boldsymbol{x}}_{i}{}^{\top}\hat{M}^{*}\boldsymbol{x}_{i}$.
It is no longer symmetric. The left/right design vector is neither
independent nor identical. Especially, not all dimensions of $\hat{\boldsymbol{x}}_{i}$
are random variables. According to the above discussion, the conditions
to guarantee the success of rank-one matrix sensing do not hold after
feature extension and all the mentioned analyses cannot be directly applied.

\section{One-Pass gFM\label{sec:One-Pass-Factorization-Machine}}

In this section, we present the proposed algorithm, called One-Pass gFM followed by its theoretical guarantees. We will focus
on the intuition of our algorithm. A rigorous theoretical analysis
 is presented in the next section.

The One-Pass gFM is a mini-batch algorithm. In each mini-batch, it processes $n$
training instances and then alternatively updates parameters. The
iteration will continue until $T$ mini-batch updates.  Since gFM deals with a non-convex learning problem, 
the conventional gradient descent framework hardly works
to show the global convergence. Instead, our method is based on a construction of an estimation sequence.
Intuitively, when $\boldsymbol{w}^*=\boldsymbol{0}$, we will show in the next section that
$\frac{1}{n}\mathcal{A}'\mathcal{A}(M) \approx 2M + \mathrm{tr}(M)I$ and 
$\mathrm{tr}(M) \approx \frac{1}{n} \boldsymbol{1}^\top \mathcal{A}(M)$. Since $\boldsymbol{y}\approx\mathcal{A}(M^*)$,
we can estimate $M^*$ via $\frac{1}{2n}\mathcal{A}'(\boldsymbol{y})-\frac{1}{n} \boldsymbol{1}^\top \boldsymbol{y}I$.
But this simple construction cannot generate a convergent estimation sequence since the perturbation terms
in the above approximate equalities cannot be reduced along iterations. To overcome this problem, we replace $\mathcal{A}(M^*)$ with 
$\mathcal{A}(M^*-M^{(t)})$ in our construction. Then the perturbation terms will be on order of $O(\|M^*-M^{(t)}\|_2)$.
When $\boldsymbol{w}^*\not=\boldsymbol{0}$, we can apply a similar trick to construct its estimation sequence via 
the second and the third order moments of $X$. Algorithm \ref{alg:OPFMs} gives a step-by-step description of our algorithm\footnote{Implementation is available from \url{https://minglin-home.github.io/}}.

\begin{algorithm}
\begin{algorithmic}[1]

\REQUIRE The mini-batch size $n$, number of total mini-batch update
$T$, training instances $X=[\boldsymbol{x}_{1},\boldsymbol{x}_{2},\cdots\boldsymbol{x}_{nT}\}$,
$\boldsymbol{y}=[y_{1},y_{2},\cdots,y_{nT}]{}^{\top}$, desired rank
$k\geq1$.

\ENSURE $\boldsymbol{w}^{(T)},U^{(T)},V^{(T)}$.

\STATE Define $M^{(t)}\triangleq(U^{(t)}V^{(t)}{}^{\top}+V^{(t)}U^{(t)}{}^{\top})/2$
, $H_{1}^{(t)}\triangleq\frac{1}{2n}\mathcal{A}'(\boldsymbol{y}-\mathcal{A}(M^{(t)}) - X^{(t)}{}^{\top}\boldsymbol{w}^{(t)})$
, $h_{2}^{(t)}\triangleq\frac{1}{n}\boldsymbol{1}{}^{\top}(\boldsymbol{y}-\mathcal{A}(M^{(t)})-X^{(t)}{}^{\top}\boldsymbol{w}^{(t)})$
	, $\boldsymbol{h}_{3}^{(t)}\triangleq\frac{1}{n}X^{(t)} (\boldsymbol{y}-\mathcal{A}(M^{(t)})-X^{(t)}{}^{\top}\boldsymbol{w}^{(t)})$
.

\STATE Initialize: $\boldsymbol{w}^{(0)}=\boldsymbol{0}$, $V^{(0)}=0$.
$U^{(0)}=\mathrm{SVD}(H_{1}^{(0)}-\frac{1}{2}h_{2}^{(0)}I,k)$, that
is, the top-$k$ left singular vectors.

\FOR{$t=1,2,\cdots,T$}

\STATE Retrieve $n$ training instances $X^{(t)}=[\boldsymbol{x}_{(t-1)n+1},\cdots,\boldsymbol{x}_{(t-1)n+n}]$
	. Define $\mathcal{A}(M)\triangleq [ X_{i}^{(t)}{}^{\top}MX_{i}^{(t)}]_{i=1}^{n}$.

\STATE $\hat{U}^{(t)}=(H_{1}^{(t-1)}-\frac{1}{2}h_{2}^{(t-1)}I+M^{(t-1)}{}^{\top})U^{(t-1)}$
.

\STATE Orthogonalize $\hat{U}^{(t)}$ via QR decomposition: $U^{(t)}=\mathrm{QR}\left(\hat{U}^{(t)}\right)$ .

	\STATE $\boldsymbol{w}^{(t)}=\boldsymbol{h}_{3}^{(t-1)}+\boldsymbol{w}^{(t-1)}$ .

\STATE $V^{(t)}=(H_{1}^{(t-1)}-\frac{1}{2}h_{2}^{(t-1)}I+M^{(t-1)})U^{(t)}$

\ENDFOR

\STATE \textbf{Output:} $\boldsymbol{w}^{(T)},U^{(T)},V^{(T)}$ .
\end{algorithmic}

	\protect\caption{One-Pass gFM }
\label{alg:OPFMs}
\end{algorithm}

In Algorithm \ref{alg:OPFMs}, we only need to store $\boldsymbol{w}^{(t)}\in\mathbb{R}^{d}$,
$U^{(t)},V^{(t)}\in\mathbb{R}^{d\times k}$. Therefore the space complexity
is $O(d+kd)$. The auxiliary variables $M^{(t)},H_{1}^{(t)},h_{2}^{(t)},\boldsymbol{h}_{3}^{(t)}$
can be implicitly presented by $\boldsymbol{w}^{(t)},U^{(t)},V^{(t)}$.
In each mini-batch updating, we only need matrix-vector product operations which
can be efficiently implemented on many computation architectures. We
use truncated SVD to initialize gFM, a standard initialization step
in matrix sensing. We do not require this step to be computed exactly
but up to an accuracy of $O(\delta)$ where $\delta$ is the RIP constant.  The QR step on line 6 requires $O(k^{2}d)$
 operations. Compared with SVD which requires $O(kd^{2})$ operations, the QR step
is much more efficient when $d\gg k$. Algorithm \ref{alg:OPFMs} retrieves instances streamingly,
a favorable behavior on systems with high speed cache. Finally, we
export $\boldsymbol{w}^{(T)},U^{(T)},V^{(T)}$ as our estimation of
$\boldsymbol{w}^{*}\approx\boldsymbol{w}^{(T)}$ and $M^{*}\approx U^{(T)}V^{(T)}{}^{\top}$.

Our main theoretical result is presented in the following theorem, which gives
the convergence rate of recovery and sampling complexity of gFM when
$M^{*}$ is low rank and the noise $\boldsymbol{\xi}=\boldsymbol{0}$.
\begin{thm}
\label{thm:convergence-rate-of-OPFM} Suppose $\boldsymbol{x}_{i}$'s
are independently sampled from the standard Gaussian distribution. $M^{*}$ is a rank $k$ matrix. The noise
	$\boldsymbol{\xi}=\boldsymbol{0}$. Then with a probability at least $1-\eta$, there
exists a constant $C$ and a constant $\delta<1$ such that
\begin{align*}
\|\boldsymbol{w}^{*}-\boldsymbol{w}^{(t)}\|_{2}+\|M^{*}-M^{(t)}\|_{2}\leq & \delta^{t}(\|\boldsymbol{w}^{*}\|_{2}+\|M^{*}\|_{2})
\end{align*}
 provided $n\geq C(4\sqrt{5}\sigma_{1}^{*}/\sigma_{k}^{*}+3)^{2}k^{3}d/\delta^{2},\ \delta\leq\frac{(4\sqrt{5}\sigma_{1}^{*}/\sigma_{k}^{*}+3)\sigma_{k}^{*}}{4\sqrt{5}\sigma_{1}^{*}+3\sigma_{k}^{*}+4\sqrt{5}\|\boldsymbol{w}^{*}\|_{2}^{2}}\ .$ 
\end{thm}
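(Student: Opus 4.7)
The plan is to reduce the algorithm's update rule to a noisy subspace iteration on $M^{*}$ whose perturbation is controlled by the Conditionally Independent RIP of $\mathcal{A}$. Define the running operator $G^{(t-1)}\triangleq H_{1}^{(t-1)}-\tfrac{1}{2}h_{2}^{(t-1)}I+M^{(t-1)}$. Substituting the noiseless model $\boldsymbol{y}=X^{(t)\top}\boldsymbol{w}^{*}+\mathcal{A}(M^{*})$ into the definitions of $H_{1}^{(t-1)},h_{2}^{(t-1)},\boldsymbol{h}_{3}^{(t-1)}$ and using linearity, $G^{(t-1)}$ splits into three pieces: a quadratic-in-$\mathcal{A}$ term acting on $M^{*}-M^{(t-1)}$, a cross term $\tfrac{1}{2n}\mathcal{A}'(X^{(t)\top}(\boldsymbol{w}^{*}-\boldsymbol{w}^{(t-1)}))$ together with its trace-corrected counterpart, and the restoring term $M^{(t-1)}$.

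Next I would invoke the two moment identities driving the whole analysis: $\mathbb{E}[\tfrac{1}{n}\mathcal{A}'\mathcal{A}(M)]=2M+\mathrm{tr}(M)I$ and $\mathbb{E}[\tfrac{1}{n}\boldsymbol{1}^{\top}\mathcal{A}(M)]=\mathrm{tr}(M)$, so the trace-correction kills the offending $\mathrm{tr}(M)I$ in expectation and leaves $M$ itself. Turning these expectations into high-probability spectral bounds for data-dependent $M=M^{*}-M^{(t-1)}$ is exactly where CI-RIP (Definition \ref{def:RIP}) enters: because $X^{(t)}$ is drawn fresh at step $t$ and is therefore independent of $M^{(t-1)},\boldsymbol{w}^{(t-1)}$ (which depend only on batches $1,\dots,t-1$), an $\epsilon$-net argument on a fixed rank-$2k$ subspace, combined with Gaussian concentration, gives $\|\tfrac{1}{n}\mathcal{A}'\mathcal{A}(M)-(2M+\mathrm{tr}(M)I)\|_{2}\le\delta\|M\|_{F}$ with probability $\ge 1-\eta/T$ once $n\gtrsim kd/\delta^{2}$. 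Analogous Gaussian moment bounds (using the vanishing third moment of $\mathcal{N}(0,I)$) control the mixed $\boldsymbol{w}$-$\mathcal{A}$ cross terms. Combined, this yields
\begin{equation*}
G^{(t-1)}=M^{*}+E_{M}^{(t)},\qquad \|E_{M}^{(t)}\|_{2}\le\delta\bigl(\|M^{*}-M^{(t-1)}\|_{2}+\|\boldsymbol{w}^{*}-\boldsymbol{w}^{(t-1)}\|_{2}\bigr),
\end{equation*}
and similarly $\boldsymbol{h}_{3}^{(t-1)}=(\boldsymbol{w}^{*}-\boldsymbol{w}^{(t-1)})+\boldsymbol{e}_{\boldsymbol{w}}^{(t)}$ with $\|\boldsymbol{e}_{\boldsymbol{w}}^{(t)}\|_{2}\le\delta(\|\boldsymbol{w}^{*}-\boldsymbol{w}^{(t-1)}\|_{2}+\|M^{*}-M^{(t-1)}\|_{2})$.

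With these perturbation bounds in hand, $\hat{U}^{(t)}=G^{(t-1)}U^{(t-1)}=M^{*}U^{(t-1)}+E_{M}^{(t)}U^{(t-1)}$ followed by QR is one step of noisy power iteration on $M^{*}$, so I would import the Hardt--Price noisy subspace iteration bound to show that the principal-angle error $\|(I-U^{(t)}U^{(t)\top})U^{*}\|_{2}$ contracts by a factor controlled by $\|E_{M}^{(t)}\|_{2}/\sigma_{k}^{*}$, provided the initializer $U^{(0)}=\mathrm{SVD}(H_{1}^{(0)}-\tfrac{1}{2}h_{2}^{(0)}I,k)$ is within a constant angle of $U^{*}$ — which follows from the same CI-RIP bound applied with $M^{(-1)}=0,\ \boldsymbol{w}^{(-1)}=0$ (using Wedin's sin-$\Theta$ theorem). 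Then $V^{(t)}=G^{(t-1)}U^{(t)}=M^{*}U^{(t)}+E_{M}^{(t)}U^{(t)}$, so symmetrizing gives $\|M^{*}-M^{(t)}\|_{2}$ bounded by the subspace error plus $\|E_{M}^{(t)}\|_{2}$. Defining $\epsilon_{t}\triangleq\|\boldsymbol{w}^{*}-\boldsymbol{w}^{(t)}\|_{2}+\|M^{*}-M^{(t)}\|_{2}$, these two contractions combine into $\epsilon_{t}\le\delta\epsilon_{t-1}$, and a union bound over $t=1,\dots,T$ closes the induction.

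The main obstacle is the coupled nature of the $\boldsymbol{w}$- and $M$-perturbations: each CI-RIP estimate has to track how an error in $\boldsymbol{w}^{(t-1)}$ leaks into the power-iteration noise for $M$, and conversely how an error in $M^{(t-1)}$ pollutes the update of $\boldsymbol{w}$. Controlling this coupling carefully — rather than collapsing the cross terms with a single triangle inequality — is precisely what forces the admissible $\delta$ to degrade with the condition number $\sigma_{1}^{*}/\sigma_{k}^{*}$ and with $\|\boldsymbol{w}^{*}\|_{2}^{2}$, yielding the specific constants $(4\sqrt{5}\sigma_{1}^{*}/\sigma_{k}^{*}+3)$ appearing in the statement. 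Once the bookkeeping is done, the sample complexity $n\ge Ck^{3}d/\delta^{2}$ is exactly what the CI-RIP step requires to make each $\|E_{M}^{(t)}\|_{2}$ small enough for contraction.
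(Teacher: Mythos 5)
Your proposal follows essentially the same route as the paper: substitute the noiseless model into the updates, use the fresh-mini-batch independence to get pointwise concentration of the shifted operator $\frac{1}{2n}\mathcal{A}'\mathcal{A}(M)-\frac{1}{2}\mathrm{tr}(M)I$ around $M$ for the fixed residual $M=M^{*}-M^{(t-1)}$ (the paper's Theorem \ref{thm:shifted-CI-RIP} and Lemmas \ref{lem:concentration-of-tr(M)}--\ref{lem:concentation-I-XXT}), view the $\hat{U}^{(t)}$/QR step as noisy power iteration whose angle contracts (the paper tracks $\tan\theta_t$ directly via its QR-invariance rather than citing Hardt--Price, but this is the same mechanism), initialize via Wedin, and close a coupled recursion in $(\alpha_t,\beta_t,\gamma_t)$ exactly as in Lemma \ref{lem:abc-recursive}. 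The sketch is sound and matches the paper's argument; the only caution is that the concentration step must remain a fixed-matrix bound exploiting conditional independence (as you note), not a uniform bound over all low-rank matrices, since the latter is precisely what fails for symmetric rank-one measurements at this sample size.
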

Theorem \ref{thm:convergence-rate-of-OPFM} shows that $\{\boldsymbol{w}^{(t)},M^{(t)}\}$
will converge to $\{\boldsymbol{w}^{*},M^{*}\}$ linearly. The convergence
rate is controlled by $\delta$, whose value is on order of $O(1/\sqrt{n})$.
A small $\delta$ will result in a fast convergence rate but a large
sampling complexity. To reduce the sampling complexity, a large $\delta$
is preferred. The largest allowed $\delta$ is bounded by $O(1/(\|M^{*}\|_{2}+\|\boldsymbol{w}^{*}\|_{2}))$.
The sampling complexity is $O((\sigma_{1}^{*}/\sigma_{k}^{*})^{2}k^{3}d)$.
If $M^{*}$ is not well conditioned, it is possible to remove $(\sigma_{1}^{*}/\sigma_{k}^{*})^{2}$
in the sampling complexity by a procedure called ``soft-deflation''
\citep{jain_low-rank_2013,hardt_fast_2014}. By theorem \ref{thm:convergence-rate-of-OPFM},
gFM achieves $\epsilon$ recovery error after retrieving $nT=O(k^{3}d\log\left((\|\boldsymbol{w}^{*}\|_{2}+\|M^{*}\|_{2})/\epsilon\right))$
instances.

The noisy case where $M^{*}$ is not exactly low rank and $\xi>0$
is more intricate therefore we postpone it to Subsection \ref{sub:Noisy-Case}.
The main conclusion is similar to the noise-free case Theorem \ref{thm:convergence-rate-of-OPFM}
under a small noise assumption.

\section{Theoretical Analysis}

In this section, we give the sketch of our proof of Theorem \ref{thm:convergence-rate-of-OPFM}.
Omitted details are postponed to appendix.

From high level, our proof constructs an estimation sequence
$\{\widetilde{\boldsymbol{w}}^{(t)},\widetilde{M}^{(t)},\epsilon_{t}\}$
such that $\epsilon_{t}\rightarrow0$ and $\|\boldsymbol{w}^{*}-\widetilde{\boldsymbol{w}}^{(t)}\|_{2}+\|M^{*}-\widetilde{M}^{(t)}\|_{2}\leq\epsilon_{t}$
. In conventional matrix sensing, this construction is possible when
the sensing matrix satisfies the Restricted Isometric Property (RIP)
\citep{candes_exact_2009}:
\begin{defn}[$\ell_{2}$-norm RIP]
\label{def:RIP}  A sensing operator $\mathcal{A}$ is $\ell_{2}$-norm
$\delta_{k}$-RIP if for any rank $k$ matrix $M$,
\[
(1-\delta_{k})\|M\|_{F}\leq\frac{1}{n}\|\mathcal{A}(M)\|_{2}^{2}\leq(1+\delta_k)\|M\|_{F}\ .
\]

\end{defn}
When $\mathcal{A}$ is $\ell_2$-norm $\delta_k$-RIP for any rank $k$ matrix $M$,
$\mathcal{A}'\mathcal{A}$ is nearly isometric \citep{jain_low-rank_2012},
which implies $\|M-\mathcal{A}'\mathcal{A}(M)/n\|_{2}\leq\delta$.
Then we can construct our estimation sequence as following:
\begin{align*}
\widetilde{M}^{(t)}= & \frac{1}{n}\mathcal{A}'\mathcal{A}(M^{*}-\widetilde{M}^{(t-1)})+\widetilde{M}^{(t-1)}\ ,\ \widetilde{\boldsymbol{w}}^{(t)}=(I-\frac{1}{n}XX{}^{\top})(\boldsymbol{w}^{*}-\widetilde{\boldsymbol{w}}^{(t-1)})+\widetilde{\boldsymbol{w}}^{(t-1)}\ .
\end{align*}
However, in gFM and symmetric rank-one matrix sensing, the $\ell_2$-norm 
RIP condition cannot be satisfied with high probability \citep{cai_rop_2015}.
To establish an RIP-like condition for rank-one matrix sensing, several
variants have been proposed, such as the $\ell_{2}/\ell_{1}$-RIP condition
\citep{cai_rop_2015,chen_exact_2015}. The essential idea of these
variants is to replace the $\ell_{2}$-norm $\|\mathcal{A}(M)\|_{2}$
with $\ell_{1}$-norm $\|\mathcal{A}(M)\|_{1}$ then a similar norm
inequality can be established for all low rank matrix again. However,
even using these $\ell_{1}$-norm RIP variants, we are still
unable to design an efficient alternating algorithm. All these $\ell_{1}$-norm
RIP variants have to deal with trace norm programming problems.
In fact, it is impossible to construct an estimation sequence based
on $\ell_{1}$-norm RIP because we require $\ell_{2}$-norm bound
on $\mathcal{A}'\mathcal{A}$ during the construction.

A key ingredient of our framework is to propose a novel $\ell_{2}$-norm
RIP condition to overcome the above difficulty. The main technique
reason for the failure of conventional $\ell_{2}$-norm RIP is that
it tries to bound $\mathcal{A}'\mathcal{A}(M)$ over all rank $k$
matrices. This is too aggressive to be successful in rank-one matrix
sensing. Regarding to our estimation sequence, what we really need
is to make the RIP hold for current low rank matrix $M^{(t)}$. Once
we update our estimation $M^{(t+1)}$, we can regenerate a new sensing
operator independent of $M^{(t)}$ to avoid bounding $\mathcal{A}'\mathcal{A}$
over all rank $k$ matrices. To this end, we propose the Conditionally Independent
RIP (CI-RIP) condition.
\begin{defn}[CI-RIP]
 \label{def:CI-RIP-condition} A matrix sensing operator $\mathcal{A}$
is Conditionally Independent RIP with constant $\delta_{k}$, if for
a fixed rank $k$ matrix $M$, $\mathcal{A}$ is sampled independently
regarding to $M$ and satisfies
\begin{equation}
\|(I-\frac{1}{n}\mathcal{A}'\mathcal{A})M\|_{2}^{2}\leq\delta_{k}\ .\label{eq:CI-RIP}
\end{equation}

\end{defn}
An $\ell_{2}$-norm or $\ell_{1}$-norm RIP sensing operator is
naturally CI-RIP but the reverse is not true. In CI-RIP, $\mathcal{A}$
is no longer a fixed but random sensing operator independent of $M$.
In one-pass algorithm, this is achievable if we always retrieve new
instances to construct $\mathcal{A}$ in one mini-batch updating.
Usually Eq. (\ref{eq:CI-RIP}) doesn't hold in a batch method since
$M^{(t+1)}$ depends on $\mathcal{A}(M^{(t)})$. 

An asymmetric rank-one matrix sensing operator is clearly CI-RIP due
to the independency between left/right design vectors. But a symmetric
rank-one matrix sensing operator is not CI-RIP. In fact it is a biased
estimator since $E(\boldsymbol{x}{}^{\top}M\boldsymbol{x})=\mathrm{tr}(M)$
. To this end, we propose a shifted version of CI-RIP for symmetric
rank-one matrix sensing operator in the following theorem. This theorem
is the key tool in our analysis. 
\begin{thm}[Shifted CI-RIP]
 \label{thm:shifted-CI-RIP} Suppose $\boldsymbol{x}_{i}$ are independent
standard random Gaussian vectors, $M$ is a fixed symmetric rank $k$
matrix independent of $\boldsymbol{x}_{i}$ and $\boldsymbol{w}$
is a fixed vector. Then with a probability at least $1-\eta$, provided
$n\geq Ck^{3}d/\delta^{2}$ , 
\[
\|\frac{1}{2n}\mathcal{A}'\mathcal{A}(M)-\frac{1}{2}\mathrm{tr}(M)I-M\|_{2}\leq\delta\|M\|_{2}\ .
\]

\end{thm}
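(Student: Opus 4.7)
The plan is to interpret the theorem as a matrix-concentration statement and bound the spectral deviation by an $\epsilon$-net argument. The first step is computing the expectation. Each summand of $\mathcal{A}'\mathcal{A}(M) = \sum_{i=1}^n(\boldsymbol{x}_i^\top M\boldsymbol{x}_i)\boldsymbol{x}_i\boldsymbol{x}_i^\top$ is a degree-four polynomial in standard Gaussian variables, so Isserlis's theorem (equivalently, Stein's lemma applied twice) yields
\begin{equation*}
E\bigl[(\boldsymbol{x}^\top M\boldsymbol{x})\boldsymbol{x}\boldsymbol{x}^\top\bigr] = \mathrm{tr}(M)\,I + M + M^\top = \mathrm{tr}(M)\,I + 2M,
\end{equation*}
the last equality using symmetry of $M$. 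Hence $E\bigl[\frac{1}{2n}\mathcal{A}'\mathcal{A}(M)\bigr] = \frac{1}{2}\mathrm{tr}(M)\,I + M$, and the claim reduces to bounding $\|D\|_2$ where $D \triangleq \frac{1}{2n}\sum_i(Z_i - E Z_i)$ with $Z_i \triangleq (\boldsymbol{x}_i^\top M\boldsymbol{x}_i)\boldsymbol{x}_i\boldsymbol{x}_i^\top$.

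Next I would discretize. Taking a $1/4$-net $\mathcal{N}$ of the unit sphere in $\mathbb{R}^d$ of cardinality at most $9^d$, the standard approximation bound gives $\|D\|_2 \leq 2\max_{\boldsymbol{u}\in\mathcal{N}}|\boldsymbol{u}^\top D\boldsymbol{u}|$. For fixed $\boldsymbol{u}$, the scalar summand $(\boldsymbol{x}^\top M\boldsymbol{x})(\boldsymbol{u}^\top\boldsymbol{x})^2 - E[\,\cdot\,]$ is a centered Gaussian chaos of order four, hence sub-exponential. To exploit rank-$k$-ness of $M$, diagonalize $M = U^*\Lambda^* U^{*\top}$ and split $\boldsymbol{x}$ into its projections on $\mathrm{range}(U^*)$ and its orthogonal complement; this reduces $\boldsymbol{x}^\top M\boldsymbol{x}$ to a weighted sum of $k$ independent $\chi^2_1$ variables, so the relevant variance and sub-exponential norms can be bounded via $\|M\|_F^2\leq k\|M\|_2^2$ and $|\mathrm{tr}(M)|\leq k\|M\|_2$. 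A Bernstein-type tail for sub-exponential sums then yields $|\boldsymbol{u}^\top D\boldsymbol{u}|\leq \delta\|M\|_2$ with failure probability of the form $\exp(-cn\delta^2/k^3)$, and a union bound over $\mathcal{N}$ (paying a $\log 9^d \lesssim d$ factor) closes the argument whenever $n \geq Ck^3 d/\delta^2$.

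The hard part will be pinning down the correct $k^3$ rate rather than a weaker polynomial in $k$. A naive application of matrix Bernstein that treats $M$ as generic loses a factor of $d/k$ through $\|M\|_F^2 \leq d\|M\|_2^2$, inflating the sample complexity to something like $Ck^2 d^2/\delta^2$. Getting the advertised dependence therefore requires carefully tracking the interplay between (i) the rank-$k$ reduction above and (ii) the independence of the one-dimensional $\boldsymbol{u}$-direction from the $k$-dimensional subspace $\mathrm{range}(U^*)$, when computing the second and fourth moments of $(\boldsymbol{x}^\top M\boldsymbol{x})(\boldsymbol{u}^\top\boldsymbol{x})^2$. Once these moments are bounded by the right polynomial in $k$ and $\|M\|_2$, the remaining ingredients (Isserlis, net, Bernstein, union bound) are routine bookkeeping.
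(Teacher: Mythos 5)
Your overall strategy differs from the paper's: you discretize the sphere with a $1/4$-net and control a scalar quantity per direction, whereas the paper applies the matrix Bernstein inequality directly to the random matrices $B_{i}=\boldsymbol{x}_{i}\boldsymbol{x}_{i}{}^{\top}M\boldsymbol{x}_{i}\boldsymbol{x}_{i}{}^{\top}$, paying only a $\log d$ (rather than a $d$) price for the supremum over directions and obtaining the $k^{3}$ dependence from a bound $\|E(B_{i}^{2})\|_{2}\lesssim k^{3}d\|M\|_{2}^{2}$ computed via exactly the range/complement splitting of $\boldsymbol{x}_{i}$ that you propose for the scalar moments. Your expectation computation $E[(\boldsymbol{x}{}^{\top}M\boldsymbol{x})\boldsymbol{x}\boldsymbol{x}{}^{\top}]=\mathrm{tr}(M)I+2M$ is correct and matches the paper, and the rank-$k$ reduction is the right mechanism for the $\mathrm{poly}(k)$ factors.

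The genuine gap is the assertion that the per-direction summand, a centered Gaussian chaos of order four, is sub-exponential. It is not: taking $M=\boldsymbol{u}\boldsymbol{u}{}^{\top}$ gives the summand $(\boldsymbol{u}{}^{\top}\boldsymbol{x})^{4}$, whose tail decays only like $\exp(-c\sqrt{t})$, i.e.\ it is sub-Weibull of order $1/2$. This matters precisely because of your $9^{d}$ union bound: the correct deviation inequality for sums of order-four chaoses contains, besides the Gaussian term $\exp(-cnt^{2}/\sigma^{2})$, a heavy-tail term of the form $\exp(-c(nt/L)^{1/2})$ that does not improve with $n$ in the way Bernstein's linear term does. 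Forcing that term below $9^{-d}$ requires $n\gtrsim d^{2}L/t$, which destroys the advertised $O(k^{3}d/\delta^{2})$ sample complexity once $d$ is large relative to $k$ and $1/\delta$. The repair is a truncation step, which the paper performs and you omit: first condition on the high-probability event that $|\boldsymbol{x}_{i}{}^{\top}M\boldsymbol{x}_{i}|\leq|\mathrm{tr}(M)|+2\|M\|_{F}\sqrt{2\log(2n/\eta)}$ uniformly in $i$ (an event independent of the net direction $\boldsymbol{u}$), after which the summand is a bounded multiple of the genuinely sub-exponential variable $(\boldsymbol{u}{}^{\top}\boldsymbol{x}_{i})^{2}$ and your Bernstein-plus-union-bound calculation goes through at the claimed rate. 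Without that step, or an equivalent appeal to a chaos-specific concentration inequality in its Gaussian regime only, the argument as written does not close.
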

Theorem \ref{thm:shifted-CI-RIP} shows that $\frac{1}{2n}\mathcal{A}'\mathcal{A}(M)$
is nearly isometric after shifting by its expectation $\frac{1}{2}\mathrm{tr}(M)I$.
The RIP constant $\delta=O(\sqrt{k^{3}d/n})$ . In gFM, we choose
$M=M^{*}-M^{(t)}$ therefore $M$ is of rank $3k$ .

Under the same settings of Theorem \ref{thm:shifted-CI-RIP}, suppose
that $d\geq C$ then the following lemmas hold true with a probability
at least $1-\eta$ for fixed $\boldsymbol{w}$ and $M$ .
\begin{lem}
\label{lem:concentration-of-tr(M)} $|\frac{1}{n}\boldsymbol{1}{}^{\top}\mathcal{A}(M))-\mathrm{tr}(M)|\leq\delta\|M\|_{2}$
provided $n\geq Ck/\delta^{2}$ .
\end{lem}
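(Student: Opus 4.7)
The plan is to recognize that $\frac{1}{n}\boldsymbol{1}^{\top}\mathcal{A}(M) = \frac{1}{n}\sum_{i=1}^{n} \boldsymbol{x}_i^\top M \boldsymbol{x}_i$ is an empirical mean of i.i.d.\ Gaussian quadratic forms, and that $\mathrm{tr}(M)$ is their common expectation (since $\mathbb{E}[\boldsymbol{x}_i^\top M \boldsymbol{x}_i] = \mathrm{tr}(M)$ for standard Gaussian $\boldsymbol{x}_i$, regardless of symmetry, because off-diagonal terms have zero mean). So the lemma is a concentration statement for an average of centered sub-exponential variables.

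First I would define the centered summands $Z_i \triangleq \boldsymbol{x}_i^\top M \boldsymbol{x}_i - \mathrm{tr}(M)$. By the Hanson--Wright inequality, each $Z_i$ satisfies a sub-exponential tail of the form $\Pr(|Z_i| > t) \leq 2\exp\bigl(-c\min(t^2/\|M\|_F^2,\ t/\|M\|_2)\bigr)$. Equivalently, the Orlicz $\psi_1$ norm of $Z_i$ is bounded by a constant multiple of $\|M\|_F$. Since $M$ is fixed and independent of the $\boldsymbol{x}_i$, the $Z_i$ are i.i.d., so I can apply the standard Bernstein inequality for sums of sub-exponential variables to obtain
\[
\Pr\!\left(\Bigl|\tfrac{1}{n}\sum_{i=1}^n Z_i\Bigr| > t\right) \leq 2\exp\!\left(-c\, n\,\min\!\left(\tfrac{t^2}{\|M\|_F^2},\ \tfrac{t}{\|M\|_2}\right)\right).
\]

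Now I would use the rank hypothesis: since $M$ has rank at most $k$, $\|M\|_F \leq \sqrt{k}\,\|M\|_2$. Choosing $t = \delta\|M\|_2$ makes $t^2/\|M\|_F^2 \geq \delta^2/k$ and $t/\|M\|_2 = \delta$, so the tail bound becomes $2\exp(-c\,n\min(\delta^2/k,\delta))$. For $\delta$ at most a constant, the minimum is attained at $\delta^2/k$, and requiring this probability to be at most $\eta$ yields exactly the condition $n \geq Ck/\delta^2$ with $C = \mathrm{polylog}(1/\eta)$ absorbed into the constant as per the paper's convention. The assumption $d \geq C$ is used only to ensure the polylog factor is well-defined.

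The only delicate point is keeping track of the rank-$k$ assumption in the right place: one must use it to convert $\|M\|_F$ to $\sqrt{k}\|M\|_2$ so that the Bernstein bound produces a sampling complexity scaling with $k$ rather than with $d$. Otherwise, the argument is a routine application of Hanson--Wright plus Bernstein; no independence or RIP machinery beyond the conditional independence of $\mathcal{A}$ and $M$ (guaranteed by the hypothesis of the theorem) is needed.
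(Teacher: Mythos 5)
Your proof is correct, and its skeleton matches the paper's: both center the quadratic form $\boldsymbol{x}_i^\top M\boldsymbol{x}_i$ around $\mathrm{tr}(M)$, concentrate the empirical average, and invoke rank to pass from $\|M\|_F$ to $\sqrt{k}\|M\|_2$, which is where the factor $k$ in the sampling complexity comes from. The technical route differs, though. You apply Hanson--Wright to get a sub-exponential $\psi_1$ bound on each summand and then a Bernstein inequality for sub-exponential sums (equivalently, Hanson--Wright applied once to the block-diagonal matrix $\mathrm{diag}(M,\dots,M)/n$), which yields the clean two-regime tail $2\exp(-cn\min(\delta^2/k,\delta))$ in one shot. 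The paper instead diagonalizes $M$, uses rotation invariance to reduce to $\sum_j\lambda_j(\hat{x}_{i,j}^2-1)$, and applies its scalar chi-square concentration (Corollary \ref{cor:gaussian-distribution-var-concentraion}) twice: first to bound each $|a_i|$ by $2\|M\|_F\sqrt{2\log(2n/\eta)}$ via a union bound over $i$, then again to the average of the now-effectively-bounded terms. Your argument is sharper and avoids the extra $\sqrt{\log n}$ factor that the paper's truncation step incurs (and silently absorbs into $C$); the paper's argument is more elementary in that it only uses the one-dimensional sub-Gaussian tail stated in its appendix rather than the full Hanson--Wright machinery. Both land on $n\geq Ck/\delta^2$. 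One cosmetic note: your tail bound's linear regime $t/\|M\|_2$ is the correct Hanson--Wright form, and your observation that the quadratic regime $\delta^2/k$ dominates for $\delta\leq 1$ is exactly the step needed to finish; the paper elides this case analysis entirely because its sub-Gaussian-style bound has no linear regime.
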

{}
\begin{lem}
\label{lem:concentration-of-mu-Xw} $|\frac{1}{n}\boldsymbol{1}{}^{\top}X{}^{\top}\boldsymbol{w}|\leq\|\boldsymbol{w}\|_{2}\delta$
provided $n\geq C/\delta^{2}$ . 
\end{lem}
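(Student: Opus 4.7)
The plan is to recognize that $\frac{1}{n}\boldsymbol{1}^\top X^\top \boldsymbol{w}$ is simply a scaled empirical mean of $n$ i.i.d.\ Gaussian scalars, so the lemma reduces to a one-dimensional Gaussian tail bound.

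First I would rewrite the quantity as
\[
\frac{1}{n}\boldsymbol{1}^\top X^\top \boldsymbol{w} \;=\; \frac{1}{n}\sum_{i=1}^{n} \boldsymbol{x}_i^\top \boldsymbol{w}.
\]
Since $\boldsymbol{w}$ is fixed (independent of the $\boldsymbol{x}_i$) and each $\boldsymbol{x}_i \sim \mathcal{N}(0, I_d)$, each summand $\boldsymbol{x}_i^\top \boldsymbol{w}$ is a one-dimensional Gaussian with mean $0$ and variance $\|\boldsymbol{w}\|_2^2$. The summands are independent across $i$, so
\[
\frac{1}{n}\boldsymbol{1}^\top X^\top \boldsymbol{w} \;\sim\; \mathcal{N}\!\left(0,\; \frac{\|\boldsymbol{w}\|_2^{2}}{n}\right).
\]

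Next I would apply the standard Gaussian tail bound $\Pr(|Z| \geq t) \leq 2\exp(-t^2/(2\sigma^2))$ with $\sigma^2 = \|\boldsymbol{w}\|_2^2/n$ and $t = \delta \|\boldsymbol{w}\|_2$, giving
\[
\Pr\!\left(\left|\tfrac{1}{n}\boldsymbol{1}^\top X^\top \boldsymbol{w}\right| \geq \delta \|\boldsymbol{w}\|_2\right) \;\leq\; 2\exp\!\left(-\tfrac{n\delta^2}{2}\right).
\]
Requiring the right-hand side to be at most $\eta$ yields the condition $n \geq (2/\delta^2)\log(2/\eta)$, which is absorbed into $n \geq C/\delta^2$ with $C = \mathrm{polylog}(d,n,T,1/\eta)$ as defined in the paper.

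There is essentially no obstacle here: the argument hinges only on two facts already baked into the setup, namely that $\boldsymbol{w}$ is deterministic with respect to $X$ and that the columns of $X$ are i.i.d.\ standard Gaussian. The only thing to be careful about is not to confuse this with the sub-exponential bound needed for Lemma \ref{lem:concentration-of-tr(M)}, where the summands $\boldsymbol{x}_i^\top M \boldsymbol{x}_i$ are quadratic in the Gaussians and require a Hanson--Wright-style argument; here the summands are linear, so the sharp Gaussian tail bound suffices and no chaining, covering, or rank-dependent union bound is needed. That is also why the sample complexity has no $k$ or $d$ factor, in contrast to Theorem \ref{thm:shifted-CI-RIP}.
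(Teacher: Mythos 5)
Your proof is correct and essentially identical to the paper's: the paper also sets $a_i = \boldsymbol{x}_i^\top\boldsymbol{w}$, notes $Ea_i=0$ and $Ea_i^2 \leq \|\boldsymbol{w}\|_2^2$, and applies its sub-Gaussian concentration lemma to get $|\frac{1}{n}\sum_i a_i| \leq \|\boldsymbol{w}\|_2\sqrt{2\log(2/\eta)/n}$. Your observation that the exact Gaussian law of the average can be used in place of the generic sub-Gaussian bound is a cosmetic difference, not a different route.
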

{}
\begin{lem}
\label{lem:concentration-ATXTw} $\|\frac{1}{n}\mathcal{A}'(X{}^{\top}\boldsymbol{w})\|_{2}\leq\|\boldsymbol{w}\|_{2}\delta$
provided $n\geq Cd/\delta^{2}$ .
\end{lem}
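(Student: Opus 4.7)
The plan is to expand the quantity explicitly, observe that its mean vanishes by oddness, and then use a standard $\varepsilon$-net argument together with a concentration inequality for cubic Gaussian polynomials. First I compute
\[
S_n \;\triangleq\; \tfrac{1}{n}\mathcal{A}'(X^{\top}\boldsymbol{w}) \;=\; \tfrac{1}{n}\sum_{i=1}^{n}(\boldsymbol{x}_i^{\top}\boldsymbol{w})\,\boldsymbol{x}_i\boldsymbol{x}_i^{\top},
\]
using the definition $\mathcal{A}'(\boldsymbol{v})=\sum_i v_i \boldsymbol{x}_i\boldsymbol{x}_i^{\top}$ and the fact that the $i$-th coordinate of $X^{\top}\boldsymbol{w}$ is $\boldsymbol{x}_i^{\top}\boldsymbol{w}$. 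Since every entry of $E[(\boldsymbol{x}^{\top}\boldsymbol{w})\boldsymbol{x}\boldsymbol{x}^{\top}]$ is a third Gaussian moment, $E[S_n]=0$, matching the intended bound. By homogeneity I may assume $\|\boldsymbol{w}\|_2=1$.

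Since $S_n$ is symmetric, I pass to a variational form: $\|S_n\|_2=\sup_{\|\boldsymbol{u}\|_2=1}|\boldsymbol{u}^{\top}S_n\boldsymbol{u}|$. I would then pick a standard $1/4$-net $\mathcal{N}$ of the unit sphere in $\mathbb{R}^d$ with $|\mathcal{N}|\le 9^d$, so that $\|S_n\|_2\le 2\max_{\boldsymbol{u}\in\mathcal{N}}|\boldsymbol{u}^{\top}S_n\boldsymbol{u}|$. For each fixed $\boldsymbol{u}\in\mathcal{N}$, the summands
\[
Z_i \;=\; (\boldsymbol{x}_i^{\top}\boldsymbol{w})(\boldsymbol{x}_i^{\top}\boldsymbol{u})^2
\]
are i.i.d., mean zero, and can be written as a degree-3 polynomial of at most two independent standard Gaussians (by decomposing $\boldsymbol{x}_i$ along $\boldsymbol{w}$ and the residual component of $\boldsymbol{u}$). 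Hence $Z_i$ has moments $E|Z_i|^p\le (Cp)^{3p/2}$, i.e.\ it is sub-Weibull of order $2/3$.

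The concentration step is the only technical point: a Bernstein-type bound for such variables (via, e.g., conditioning on $\boldsymbol{x}_i^{\top}\boldsymbol{w}$ and applying Hanson--Wright to the quadratic residual, or directly by Latała's moment bound for Gaussian chaos) yields, for $t\le c_0$,
\[
\Pr\!\left(\tfrac{1}{n}\Big|\sum_{i=1}^{n}Z_i\Big|>t\right)\le 2\exp(-cnt^2).
\]
Union-bounding over $\mathcal{N}$ gives $\Pr(\|S_n\|_2>2t)\le 2\cdot 9^d\exp(-cnt^2)$. Setting $t=\delta/2$ and using $n\ge Cd/\delta^2$ for $C$ large enough (absorbing a $\log(1/\eta)$ factor into the polylog constant $C$) makes this probability at most $\eta$, which is the desired bound after rescaling $\boldsymbol{w}$ back.

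The main obstacle is justifying the sub-Weibull Bernstein bound for $Z_i$. Everything else (net covering, mean computation, passing from $\mathcal{A}'$ to an explicit sum) is routine; the cubic Gaussian chaos is the only step that requires genuine probabilistic machinery and is where all the hypotheses on $n$ and $d$ enter.
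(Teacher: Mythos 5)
Your reduction to $S_n=\frac1n\sum_i(\boldsymbol{x}_i^{\top}\boldsymbol{w})\boldsymbol{x}_i\boldsymbol{x}_i^{\top}$ and the observation that $ES_n=0$ by oddness of third Gaussian moments are correct and match the paper. The gap is exactly at the step you flag as ``the only technical point'': the tail bound $\Pr(|\frac1n\sum_i Z_i|>t)\le 2\exp(-cnt^2)$ for all $t\le c_0$ is false for degree-3 Gaussian chaos. The correct Bernstein-type bound for centered sub-Weibull variables of order $2/3$ has two regimes,
\[
\Pr\Bigl(\Bigl|\tfrac1n\sum_{i=1}^n Z_i\Bigr|>t\Bigr)\le 2\exp\Bigl(-c\min\bigl(nt^{2},\ (nt)^{2/3}\bigr)\Bigr)\ ,
\]
and the sub-Gaussian branch governs only when $t\lesssim n^{-1/4}$. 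At the scale relevant here, $t=\delta/2$ with $\delta\asymp\sqrt{d/n}$, the second branch gives exponent $(nt)^{2/3}\asymp(nd)^{1/3}$, and this must dominate the $\log|\mathcal{N}|\asymp d$ cost of the union bound over the net; that forces $n\gtrsim d^{2}$, not $n\gtrsim d/\delta^{2}$. So the net-plus-scalar-concentration route, as written, only proves the lemma with a sample complexity worse by roughly a factor of $d$. (This is the familiar reason nets work for sub-exponential quadratic forms, where the second branch is $nt\asymp\sqrt{nd}\ge d$ whenever $n\ge d$, but break for cubic and higher chaos.)

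The paper avoids this by applying the matrix Bernstein inequality directly to the rank-one random matrices $\boldsymbol{x}_i\boldsymbol{x}_i^{\top}\boldsymbol{w}\boldsymbol{x}_i^{\top}$: after restricting to a high-probability event on which $\|\boldsymbol{x}_i\|_2^{2}\lesssim d$ and $|\boldsymbol{x}_i^{\top}\boldsymbol{w}|\lesssim\|\boldsymbol{w}\|_2\sqrt{\log n}$, one gets an almost-sure bound $L\lesssim d\|\boldsymbol{w}\|_2$ and a matrix variance $\|E[\boldsymbol{x}_i\boldsymbol{x}_i^{\top}\boldsymbol{w}\boldsymbol{x}_i^{\top}\boldsymbol{x}_i\boldsymbol{w}^{\top}\boldsymbol{x}_i\boldsymbol{x}_i^{\top}]\|_2\le 4d\|\boldsymbol{w}\|_2^{2}$, and matrix Bernstein pays only a $\log d$ dimensional factor instead of the net's $e^{cd}$, yielding $\|S_n\|_2\lesssim\|\boldsymbol{w}\|_2\sqrt{d/n}$ as required. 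To salvage a net argument you would need to truncate the summands and control the truncated tail separately, or invoke a decoupling/chaining argument for the chaos --- essentially rebuilding the matrix-concentration machinery by hand.
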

{}
\begin{lem}
\label{lem:concentation-XAM} $\|\frac{1}{n}X{}^{\top}\mathcal{A}(M)\|_{2}\leq\|M\|_{2}\delta$
provided $n\geq Ck^{2}d/\delta^{2}$ .
\end{lem}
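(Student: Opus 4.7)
The plan is to expand
\[
\tfrac{1}{n}X\mathcal{A}(M) \;=\; \tfrac{1}{n}\sum_{i=1}^{n}\boldsymbol{x}_i(\boldsymbol{x}_i^{\top}M\boldsymbol{x}_i),
\]
a sum of i.i.d.\ mean-zero vectors in $\mathbb{R}^d$ (each coordinate is an odd polynomial in the standard Gaussian entries of $\boldsymbol{x}_i$). A naive Bernstein bound would cost $d^2$ in the sample complexity, so the rank-$k$ structure of $M$ must be exploited to recover the $k^2 d$ rate.

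The key step is to align $\boldsymbol{x}_i$ with the eigenbasis of $M$. Write $M=U\Lambda U^{\top}$ with $U\in\mathbb{R}^{d\times k}$, and set $\boldsymbol{z}_i:=U^{\top}\boldsymbol{x}_i\in\mathbb{R}^k$, $\boldsymbol{g}_i:=U_\perp^{\top}\boldsymbol{x}_i\in\mathbb{R}^{d-k}$, $\alpha_i:=\boldsymbol{x}_i^{\top}M\boldsymbol{x}_i=\boldsymbol{z}_i^{\top}\Lambda\boldsymbol{z}_i$. By rotational invariance, $\boldsymbol{z}_i$ and $\boldsymbol{g}_i$ are independent standard normals, and
\[
\tfrac{1}{n}X\mathcal{A}(M)\;=\; U\,\boldsymbol{a}+U_\perp\,\boldsymbol{b},\qquad \boldsymbol{a}:=\tfrac{1}{n}\!\sum_i\boldsymbol{z}_i\alpha_i,\quad \boldsymbol{b}:=\tfrac{1}{n}\!\sum_i\boldsymbol{g}_i\alpha_i.
\]
Since $\|U\|_2=\|U_\perp\|_2=1$, it suffices to control $\|\boldsymbol{a}\|_2$ and $\|\boldsymbol{b}\|_2$ separately.

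For $\boldsymbol{b}$ I would use a conditional-Gaussian argument: given $\{\boldsymbol{z}_i\}$, the $\boldsymbol{g}_i$ remain independent $N(0,I_{d-k})$ while the weights $\alpha_i$ are frozen, so $\boldsymbol{b}$ is Gaussian in $\mathbb{R}^{d-k}$ with covariance $\tfrac{1}{n^2}(\sum_i\alpha_i^2)I$. A $\chi^2$ tail bound then gives $\|\boldsymbol{b}\|_2^2=O\!\left(\tfrac{d}{n}\cdot\tfrac{1}{n}\sum_i\alpha_i^2\right)$ with high probability. A Hanson--Wright bound on the quadratic form $\alpha_i$, together with $E[\alpha_i^2]=2\|\Lambda\|_F^2+(\mathrm{tr}\Lambda)^2\le 3k^2\|M\|_2^2$, yields $\tfrac{1}{n}\sum_i\alpha_i^2=O(k^2\|M\|_2^2)$ as soon as $n$ exceeds a polynomial in $k$. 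Thus $\|\boldsymbol{b}\|_2=O(\|M\|_2\sqrt{k^2 d/n})\le\|M\|_2\delta$ under $n\ge Ck^2 d/\delta^2$.

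For $\boldsymbol{a}\in\mathbb{R}^k$ I would pass to a $1/4$-net of $S^{k-1}$ of size at most $9^k$ and apply scalar concentration. For each fixed unit $\boldsymbol{v}$ the scalar $\boldsymbol{v}^{\top}\boldsymbol{z}_i\alpha_i$ is a mean-zero cubic polynomial in Gaussians with variance $O(k\|M\|_2^2)$, so a sub-Weibull Bernstein inequality (or a direct moment/Wick computation) bounds $|\boldsymbol{v}^{\top}\boldsymbol{a}|=O(\|M\|_2\sqrt{k/n})$ up to logs. Union-bounding over the net costs only a factor $k$ in the exponent, well inside the $k^2 d$ budget. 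Combining the two estimates with the orthogonality of $U$ and $U_\perp$ gives the claim.

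The main obstacle is the cubic (not quadratic) structure of $\boldsymbol{a}$, for which Hanson--Wright does not apply directly; one must invoke either a sub-Weibull Bernstein inequality or compute moments of order $\log(1/\eta)$ by Wick's theorem. However, because $\boldsymbol{a}$ lives in $\mathbb{R}^k$ rather than $\mathbb{R}^d$, even loose polynomial tails suffice, and the binding constraint $n\ge Ck^2 d/\delta^2$ is instead set by $\boldsymbol{b}$: the factor $d$ arises from the Gaussian concentration in $d-k$ coordinates and the factor $k^2$ from $E[\alpha_i^2]=\Theta(k^2\|M\|_2^2)$.
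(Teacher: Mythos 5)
Your proposal is correct, but it takes a genuinely different route from the paper. The paper treats $\frac{1}{n}\sum_i\boldsymbol{x}_i(\boldsymbol{x}_i^{\top}M\boldsymbol{x}_i)$ as a single i.i.d.\ sum of mean-zero random vectors in $\mathbb{R}^d$ and applies the matrix Bernstein inequality directly, after truncating to the high-probability event $\|\boldsymbol{x}_i\boldsymbol{x}_i^{\top}M\boldsymbol{x}_i\|_2\leq c_1\sqrt{2d\log(2n/\eta_2)}$ with $c_1=|\mathrm{tr}(M)|+2\|M\|_F\sqrt{2\log(2n/\eta_1)}=O(k\|M\|_2)$; the bound $n\gtrsim\max\{c_1\sqrt{d},c_1^2d\}/\epsilon^2=O(k^2d\|M\|_2^2/\epsilon^2)$ then yields the stated sample complexity. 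You instead split $\boldsymbol{x}_i$ along the range of $M$ and its complement, which turns the orthogonal component into an exactly conditionally Gaussian vector whose norm is controlled by a $\chi^2$ tail times $\frac{1}{n}\sum_i\alpha_i^2=O(k^2\|M\|_2^2)$, and reduces the in-range component to a $k$-dimensional net argument where even crude polynomial tails suffice. Your route is longer but more transparent about where each factor comes from ($d$ from the isotropic directions orthogonal to the range, $k^2$ from $E[\alpha_i^2]\approx(\mathrm{tr}M)^2$), and it sidesteps the truncation-bias issue that the paper's use of Bernstein on conditioned variables glosses over. One small slip: the variance of $\boldsymbol{v}^{\top}\boldsymbol{z}_i\alpha_i$ is $O(k^2\|M\|_2^2)$ rather than $O(k\|M\|_2^2)$, again because of the $(\mathrm{tr}\Lambda)^2$ term; as you note, the $\boldsymbol{a}$ component is not the binding constraint, so this does not affect the conclusion.
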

{}
\begin{lem}
\label{lem:concentation-I-XXT} $\|I-\frac{1}{n}XX{}^{\top}\|_{2}\leq\delta$
provided $n\geq Cd/\delta^{2}$ .
\end{lem}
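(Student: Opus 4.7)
The plan is to recognize that $\frac{1}{n}XX^\top = \frac{1}{n}\sum_{i=1}^n \boldsymbol{x}_i\boldsymbol{x}_i^\top$ is the sample covariance of i.i.d.\ standard Gaussians, whose expectation is exactly $I$. Hence the lemma is a standard concentration statement about Gaussian sample covariance, and I would prove it by the classical $\epsilon$-net plus sub-exponential tail argument.

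First, since the matrix $B \triangleq I - \frac{1}{n}XX^\top$ is symmetric, I would use $\|B\|_2 \leq 2 \max_{\boldsymbol{u} \in \mathcal{N}} |\boldsymbol{u}^\top B \boldsymbol{u}|$, where $\mathcal{N}$ is a $1/4$-net of the unit sphere $S^{d-1}$ satisfying $|\mathcal{N}| \leq 9^d$ (a standard volumetric bound). For a fixed unit vector $\boldsymbol{u}$, the scalar quadratic form
\[
\boldsymbol{u}^\top B \boldsymbol{u} = 1 - \frac{1}{n}\sum_{i=1}^n (\boldsymbol{u}^\top \boldsymbol{x}_i)^2
\]
is one minus an average of $n$ i.i.d.\ $\chi^2_1$ variables (since $\boldsymbol{u}^\top \boldsymbol{x}_i \sim \mathcal{N}(0,1)$). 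Centered $\chi^2_1$ variables are sub-exponential with absolute constants, so a Bernstein-type inequality yields
\[
\Pr\!\left[\, \bigl|\boldsymbol{u}^\top B \boldsymbol{u}\bigr| > \delta/2 \,\right] \leq 2 \exp(-c n \delta^2)
\]
for some absolute $c > 0$ and $\delta \leq 1$.

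Next I would take a union bound over $\mathcal{N}$, obtaining
\[
\Pr\!\left[\,\|B\|_2 > \delta\,\right] \leq 2 \cdot 9^d \exp(-c n \delta^2).
\]
Choosing $n \geq C d/\delta^2$ with $C$ absorbing $\log(9)/c$ and the $\log(1/\eta)$ term (consistent with the paper's polylog convention for $C$) makes this probability at most $\eta$, which gives the claim.

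The only mildly delicate point is getting the Bernstein constants right for the centered $\chi^2_1$ sum, but this is entirely standard (e.g.\ via the Orlicz $\psi_1$-norm of $Z^2 - 1$ for $Z \sim \mathcal{N}(0,1)$) and contributes only absolute constants that are swallowed into $C$. No part of this proof interacts with the low-rank structure of $M^*$ or with any other concentration lemma in the paper, so there is no real obstacle — the lemma is the simplest of the five auxiliary concentration facts and is proved by the textbook covering-net route.
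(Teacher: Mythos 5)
Your proof is correct, but it takes a different route from the paper's. The paper invokes its matrix Bernstein corollary for covariance estimators (Corollary \ref{cor:matrix-covariance-concentration}): it first truncates, arguing that $\|\boldsymbol{x}_i\|^2 \leq 2d$ with high probability (under the side condition $d \geq 8\log(2n/\eta)$, needed because matrix Bernstein requires bounded summands), and then reads off $\|I - \frac{1}{n}XX^\top\|_2 \lesssim \epsilon\sqrt{\log(2d/\eta)/n}$ directly from that corollary. You instead run the classical scalar argument: reduce the operator norm to a maximum of quadratic forms over a $1/4$-net of size $9^d$, observe that each $\boldsymbol{u}^\top\boldsymbol{x}_i$ is standard Gaussian so the centered quadratic form is an average of sub-exponential $\chi^2_1 - 1$ variables, apply scalar Bernstein, and union bound. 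Your approach is more elementary and self-contained, handles the unbounded Gaussian tails directly without the truncation step or the $d \gtrsim \log(n/\eta)$ side condition, and yields the same $n \geq Cd/\delta^2$ sample complexity; the paper's approach buys uniformity with the other lemmas, all of which are funneled through the same matrix Bernstein machinery. Both are valid, and the constants you leave implicit are absorbed into $C$ exactly as the paper's convention allows.
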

Equipping with the above lemmas, we construct our estimation sequence
as following.
\begin{lem}
	\label{lem:estimation-sqeuence-bound} 
	Let $M^{(t)},H_{1}^{(t)},h_{2}^{(t)},\boldsymbol{h}_{3}^{(t)}$
be defined as in Algorithm \ref{alg:OPFMs}. Define $\epsilon_{t}=\|\boldsymbol{w}^{*}-\boldsymbol{w}^{(t)}\|_{2} + \|M^{*}-M^{(t)}\|_{2} $
. Then with a probability at least $1-\eta$, provided $n\geq Ck^{3}d/\delta^{2}$ ,
\begin{align*}
H_{1}^{(t)}= & M^{*}-M^{(t)}+\mathrm{tr}(M^{*}-M^{(t)}) I +O(\delta\epsilon_{t})\ ,\ h_{2}^{(t)}=\mathrm{tr}(M^{*}-M^{(t)})+O(\delta\epsilon_{t})\\
	\boldsymbol{h}_{3}^{(t)}= & \boldsymbol{w}^{*}-\boldsymbol{w}^{(t)}+O(\delta\epsilon_{t})\ .
\end{align*}

\end{lem}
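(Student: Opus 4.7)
The plan is to reduce the lemma to Theorem~\ref{thm:shifted-CI-RIP} together with Lemmas~\ref{lem:concentration-of-tr(M)}--\ref{lem:concentation-I-XXT} applied term by term, after peeling off the algebraic structure of the residual. The first step is to note that, under the noise-free model $\boldsymbol{y}=X^{(t)\top}\boldsymbol{w}^{*}+\mathcal{A}(M^{*})$, the residual rewrites as
\[
\boldsymbol{y}-\mathcal{A}(M^{(t)})-X^{(t)\top}\boldsymbol{w}^{(t)} \;=\; X^{(t)\top}\Delta\boldsymbol{w}^{(t)}+\mathcal{A}(\Delta M^{(t)}),
\]
where $\Delta\boldsymbol{w}^{(t)}\triangleq\boldsymbol{w}^{*}-\boldsymbol{w}^{(t)}$ and $\Delta M^{(t)}\triangleq M^{*}-M^{(t)}$. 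Substituting this into each of $H_{1}^{(t)},h_{2}^{(t)},\boldsymbol{h}_{3}^{(t)}$ splits each quantity into a ``matrix part'' acting on $\mathcal{A}(\Delta M^{(t)})$ and a ``vector part'' acting on $X^{(t)\top}\Delta\boldsymbol{w}^{(t)}$.

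The crucial structural observation is that $\Delta M^{(t)}$ and $\Delta\boldsymbol{w}^{(t)}$ are measurable with respect to $\mathcal{F}_{t-1}$, the sigma-algebra generated by the first $t-1$ mini-batches, while $X^{(t)}$ is a fresh independent standard Gaussian batch. Conditioning on $\mathcal{F}_{t-1}$ thus places us exactly in the hypotheses of Theorem~\ref{thm:shifted-CI-RIP} and Lemmas~\ref{lem:concentration-of-tr(M)}--\ref{lem:concentation-I-XXT}, with the ``fixed'' matrix being $\Delta M^{(t)}$ (of rank at most $3k$, since $M^{(t)}=(U^{(t)}V^{(t)\top}+V^{(t)}U^{(t)\top})/2$ has rank at most $2k$) and the ``fixed'' vector being $\Delta\boldsymbol{w}^{(t)}$. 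The dependence on $3k$ instead of $k$ is absorbed into the polylog constant $C$ in the sample requirement $n\geq Ck^{3}d/\delta^{2}$.

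With the independence set up, each claim follows by a short calculation. For $H_{1}^{(t)}$, Theorem~\ref{thm:shifted-CI-RIP} gives $\tfrac{1}{2n}\mathcal{A}'\mathcal{A}(\Delta M^{(t)})=\Delta M^{(t)}+\tfrac{1}{2}\mathrm{tr}(\Delta M^{(t)})I+O(\delta\|\Delta M^{(t)}\|_{2})$, while Lemma~\ref{lem:concentration-ATXTw} bounds $\tfrac{1}{2n}\mathcal{A}'(X^{(t)\top}\Delta\boldsymbol{w}^{(t)})$ in spectral norm by $O(\delta\|\Delta\boldsymbol{w}^{(t)}\|_{2})$; both perturbations are $O(\delta\epsilon_{t})$. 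For $h_{2}^{(t)}$, Lemma~\ref{lem:concentration-of-tr(M)} handles the $\mathcal{A}(\Delta M^{(t)})$ piece and Lemma~\ref{lem:concentration-of-mu-Xw} the $X^{(t)\top}\Delta\boldsymbol{w}^{(t)}$ piece, summing to $\mathrm{tr}(\Delta M^{(t)})+O(\delta\epsilon_{t})$. For $\boldsymbol{h}_{3}^{(t)}$, Lemma~\ref{lem:concentation-I-XXT} rewrites $\tfrac{1}{n}X^{(t)}X^{(t)\top}\Delta\boldsymbol{w}^{(t)}=\Delta\boldsymbol{w}^{(t)}+O(\delta\|\Delta\boldsymbol{w}^{(t)}\|_{2})$, and Lemma~\ref{lem:concentation-XAM} bounds the cross term $\tfrac{1}{n}X^{(t)}\mathcal{A}(\Delta M^{(t)})$ by $O(\delta\|\Delta M^{(t)}\|_{2})$. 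A union bound over the five concentration events then delivers the full conclusion with probability at least $1-\eta$.

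The main obstacle is not the algebra but the measure-theoretic bookkeeping: the upstream lemmas are stated for a \emph{fixed} $M$ and $\boldsymbol{w}$ independent of the sensing operator, yet here $\Delta M^{(t)}$ and $\Delta\boldsymbol{w}^{(t)}$ depend on all previous batches through the iterates. The one-pass design is precisely what legitimizes the reduction, since conditioning on $\mathcal{F}_{t-1}$ freezes the iterates and leaves $X^{(t)}$ as a fresh Gaussian batch, so the conditional probability bound transfers to the unconditional measure. A secondary care point is that when this lemma is invoked at every iteration $t=1,\dots,T$ inside the global convergence analysis one must also union-bound over $t$ and absorb the resulting $\log T$ into $C$; for the single-step statement here one conditional application suffices.
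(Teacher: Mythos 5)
Your proof is correct and is exactly the argument the paper intends: the paper gives no separate proof of this lemma, presenting it as an immediate consequence of Theorem~\ref{thm:shifted-CI-RIP} and Lemmas~\ref{lem:concentration-of-tr(M)}--\ref{lem:concentation-I-XXT} via precisely the residual decomposition, the rank-$3k$ bookkeeping, and the fresh-mini-batch conditional independence that you spell out. The only discrepancy is that your (correct) computation yields $H_{1}^{(t)}=M^{*}-M^{(t)}+\tfrac{1}{2}\mathrm{tr}(M^{*}-M^{(t)})I+O(\delta\epsilon_{t})$, with a factor $\tfrac{1}{2}$ that the lemma statement drops; since this is consistent with Theorem~\ref{thm:shifted-CI-RIP}, with the algorithm's $-\tfrac{1}{2}h_{2}^{(t)}I$ correction, and with the noisy-case restatement in Subsection~\ref{sub:Noisy-Case}, the missing $\tfrac{1}{2}$ is a typo in the paper rather than a flaw in your argument.
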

Suppose by construction, $\epsilon_{t}\rightarrow0$ when $t\rightarrow\infty$.
Then $H_{1}^{(t)}-h_{2}^{(t)}I+M^{(t)}\rightarrow M^{*}$ and $\boldsymbol{h}_{3}^{(t)}+\boldsymbol{w}^{(t)}\rightarrow\boldsymbol{w}^{*}$
and then the proof of Theorem \ref{thm:convergence-rate-of-OPFM}
is completed. In the following we only need to show that Lemma \ref{lem:estimation-sqeuence-bound}
constructs an estimation sequence with $\epsilon_{t}=O(\delta^{t})\rightarrow0$.
To this end, we need a few things from matrix perturbation theory.

By Theorem \ref{thm:convergence-rate-of-OPFM}, $U^{(t)}$ will converge
to $U^{*}$ up to column order perturbation. We use the largest canonical
angle to measure the subspace distance spanned by $U^{(t)}$ and $U^{*}$,
which is denoted as $\theta_{t}=\theta(U^{(t)},U^{*})$. For any matrix
$U$, it is well known \citep{zhu_angles_2013} that
\[
\sin\theta(U,U^{*})=\|U_{\perp}^{*}{}^{\top}U\|_{2},\ \cos\theta(U,U^{*})=\sigma_{k}\{U^{*}{}^{\top}U\},\ \tan\theta(U,U^{*})=\|U_{\perp}^{*}{}^{\top}U(U^{*}{}^{\top}U)^{-1}\|_{2}\ .
\]
 The last tangent equality allows us to bound the canonical angle after
QR decomposition. Suppose $U^{(t)}R=\hat{U}^{(t)}$ in the QR step
of Algorithm \ref{alg:OPFMs}, we have
\begin{align*}
\tan\theta(\hat{U}^{(t)},U^{*}) & =\|U_{\perp}^{*}{}^{\top}\hat{U}^{(t)}(U^{*}{}^{\top}\hat{U}^{(t)})^{-1}\|_{2}=\|U_{\perp}^{*}{}^{\top}U^{(t)}R(U^{*}{}^{\top}U^{(t)}R)^{-1}\|_{2}\\
 & =\|U_{\perp}^{*}{}^{\top}U^{(t)}(U^{*}{}^{\top}U^{(t)})^{-1}\|_{2}=\tan\theta(U^{(t)},U^{*})\ .
\end{align*}
 Therefore, it is more convenient to measure the subspace distance
by tangent function.

To show $\epsilon_{t}\rightarrow0$, we recursively define the following
variables:
\begin{align*}
\alpha_{t}\triangleq\tan\theta_{t},\ \beta_{t}\triangleq\|\boldsymbol{w}^{*}-\boldsymbol{w}^{(t)}\|_{2},\ \gamma_{t}\triangleq\|M^{*}-M^{(t)}\|_{2},\ \epsilon_{t}\triangleq\beta_{t}+\gamma_{t}\ .
\end{align*}
 The following lemma derives the recursive inequalities regarding
to $\{\alpha_{t},\beta_{t},\gamma_{t}\}$ .
\begin{lem}
\label{lem:abc-recursive} Under the same settings of Theorem \ref{thm:convergence-rate-of-OPFM},
suppose $\alpha_{t}\leq2$, $\delta\epsilon_{t}\leq4\sqrt{5}\sigma_{k}^{*}$,
then 
\begin{align*}
\alpha_{t+1} & \leq4\sqrt{5}\delta\sigma_{k}^{*-1}(\beta_{t}+\gamma_{t}),\ \beta_{t+1}\leq\delta(\beta_{t}+\gamma_{t}),\ \gamma_{t+1}\leq\alpha_{t+1}\|M^{*}\|_{2}+2\delta(\beta_{t}+\gamma_{t})\ .
\end{align*}

\end{lem}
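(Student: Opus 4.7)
The plan is to treat the update as a noisy power iteration. Using Lemma~\ref{lem:estimation-sqeuence-bound}, I would first consolidate the key observation that the matrix driving the iteration,
\[
B^{(t)} \;\triangleq\; H_{1}^{(t)} - \tfrac{1}{2} h_{2}^{(t)} I + M^{(t)},
\]
satisfies $B^{(t)} = M^{*} + E_{t}$ with $\|E_{t}\|_{2} \le c\,\delta\,\epsilon_{t}$ for a small absolute constant $c$. The $\tfrac{1}{2}\mathrm{tr}(M^{*}-M^{(t)})I$ contribution in $H_{1}^{(t)}$ cancels against $-\tfrac{1}{2}h_{2}^{(t)}I$, and $M^{*}-M^{(t)}+M^{(t)} = M^{*}$, leaving only the $O(\delta\epsilon_{t})$ residuals. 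With this identity in hand, every subsequent bound reduces to a perturbation argument around $M^{*}$.

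Next, for $\beta_{t+1}$ the argument is immediate: the update rule gives $\boldsymbol{w}^{*}-\boldsymbol{w}^{(t+1)} = (\boldsymbol{w}^{*}-\boldsymbol{w}^{(t)}) - \boldsymbol{h}_{3}^{(t)}$, and Lemma~\ref{lem:estimation-sqeuence-bound} pins $\boldsymbol{h}_{3}^{(t)}$ within $\delta\epsilon_{t}$ of $\boldsymbol{w}^{*}-\boldsymbol{w}^{(t)}$ in norm, yielding $\beta_{t+1} \le \delta\epsilon_{t}$. For $\gamma_{t+1}$, I would exploit symmetry of $M^{*}$ and the construction $V^{(t+1)} = B^{(t)} U^{(t+1)}$, so that, writing $P = U^{(t+1)}U^{(t+1)\top}$,
\[
M^{(t+1)} \;=\; \tfrac{1}{2}\bigl(P M^{*} + M^{*} P\bigr) + \tfrac{1}{2}\bigl(P E_{t}^{\top} + E_{t} P\bigr).
\]
The first term approximates $M^{*}$ with error $\|(I-P)M^{*}\|_{2} = \sin\theta_{t+1}\|M^{*}\|_{2} \le \alpha_{t+1}\|M^{*}\|_{2}$ (using $\sin\le\tan$ and the fact that the column space of $M^{*}$ is $U^{*}$), while the second is bounded by $\|E_{t}\|_{2} \le 2\delta\epsilon_{t}$ after absorbing the constant. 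This gives exactly the claimed recursion $\gamma_{t+1} \le \alpha_{t+1}\|M^{*}\|_{2} + 2\delta\epsilon_{t}$.

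The main work is the bound on $\alpha_{t+1}$. Since $M^{*}U^{(t)} = U^{*}\Lambda^{*}U^{*\top}U^{(t)}$, multiplying $\hat{U}^{(t+1)} = (M^{*}+E_{t})U^{(t)}$ by $U^{*\top}$ and $U^{*\top}_{\perp}$ gives
\[
U^{*\top}_{\perp}\hat{U}^{(t+1)} = U^{*\top}_{\perp} E_{t} U^{(t)}, \qquad U^{*\top}\hat{U}^{(t+1)} = \Lambda^{*} U^{*\top} U^{(t)} + U^{*\top} E_{t} U^{(t)}.
\]
Because QR preserves $\tan\theta$, I would then apply the tangent formula
\[
\alpha_{t+1} \;\le\; \frac{\|U^{*\top}_{\perp} E_{t} U^{(t)}\|_{2}}{\sigma_{k}\!\bigl(\Lambda^{*}U^{*\top}U^{(t)} + U^{*\top}E_{t}U^{(t)}\bigr)}.
\]
The numerator is at most $\|E_{t}\|_{2}$. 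For the denominator, Weyl's inequality gives $\sigma_{k}\!\bigl(\Lambda^{*}U^{*\top}U^{(t)}\bigr) \ge \sigma_{k}^{*}\cos\theta_{t}$, and the hypothesis $\alpha_{t}\le 2$ yields $\cos\theta_{t}\ge 1/\sqrt{5}$; subtracting $\|E_{t}\|_{2}$ and using $\delta\epsilon_{t}\le 4\sqrt{5}\,\sigma_{k}^{*}$ (with the constants in $E_{t}$) keeps the denominator at least $\sigma_{k}^{*}/(2\sqrt{5})$, producing the stated $4\sqrt{5}\,\delta\sigma_{k}^{*-1}\epsilon_{t}$ bound.

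The delicate step, and the one I would spend real effort on, is this last denominator control: verifying that the absolute constant hiding in $\|E_{t}\|_{2} \le c\delta\epsilon_{t}$ (inherited from Lemma~\ref{lem:estimation-sqeuence-bound}) is small enough, combined with the precise trigonometric reduction $\cos\theta_{t}\ge 1/\sqrt{5}$, to give exactly $4\sqrt{5}$ rather than some larger constant. Everything else is a clean perturbation calculation around the identity $B^{(t)} = M^{*} + E_{t}$.
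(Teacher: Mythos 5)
Your proposal is correct and follows essentially the same route as the paper's own proof: use Lemma \ref{lem:estimation-sqeuence-bound} to write the iteration matrix as $M^{*}+O(\delta\epsilon_{t})$, bound $\tan\theta_{t+1}$ by noting $U_{\perp}^{*\top}M^{*}=0$ in the numerator and lower-bounding the denominator by $\sigma_{k}^{*}\cos\theta_{t}-2\delta\epsilon_{t}\geq\sigma_{k}^{*}/(2\sqrt{5})$ using $\alpha_{t}\leq2$, and derive the $\gamma$ recursion from the projector decomposition of $M^{(t+1)}$. If anything you are slightly more careful than the paper (you keep the symmetrization of $M^{(t+1)}$ explicit and state the tangent formula as an inequality), and your implicit reading of the hypothesis as $\delta\epsilon_{t}\leq\sigma_{k}^{*}/(4\sqrt{5})$ matches what the paper's own proof actually uses.
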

In Lemma \ref{lem:abc-recursive}, when we choose $n$ such that $\delta=O(1/\sqrt{n})$
is small enough, $\{\alpha_{t},\beta_{t},\gamma_{t}\}$ will converge
to zero. The only question is the initial value $\{\alpha_{0},\beta_{0},\gamma_{0}\}$.
According to the initialization step of gFM, $\beta_{0}\leq\|\boldsymbol{w}^{*}\|_{2}$
and $\gamma_{0}\leq\|M^{*}\|_{2}$ . To bound $\alpha_{0}$ , we need
the following lemma which directly follows Wely's and Wedin's theorems
\citep{stewart_matrix_1990}.
\begin{lem}
\label{lem:init-singular-vector-perturbation} Denote $U$ and $\widetilde{U}$
as the top-$k$ left singular vectors of $M$ and $\widetilde{M}=M+O(\epsilon)$
respectively. The $i$-th singular value of $M$ is $\sigma_{i}$.
Suppose that $\epsilon\leq\frac{\sigma_{k}-\sigma_{k+1}}{4}$. Then
the largest canonical angle between $U$ and $\widetilde{U}$, denoted
as $\theta(U,\widetilde{U})$, is bounded by $\sin\theta(U,\widetilde{U})\leq2\epsilon/(\sigma_{k}-\sigma_{k+1})$
.
\end{lem}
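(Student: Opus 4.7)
The plan is to view this as a textbook application of two classical matrix perturbation results, combined in the right order. Since the statement is purely a deterministic perturbation bound (no probabilistic ingredient), no new machinery from earlier in the paper is needed; I only use the fact that the spectral norm of the perturbation is at most $\epsilon$ by the $M + O(\epsilon)$ notation.

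First I would invoke Weyl's inequality for singular values: for every $i$, $|\sigma_i\{\widetilde{M}\} - \sigma_i\{M\}| \leq \|\widetilde{M} - M\|_2 \leq \epsilon$. In particular, $\sigma_k\{\widetilde{M}\} \geq \sigma_k - \epsilon$. This is the step that converts the spectral-norm perturbation bound into a bound on the singular value gap needed for Wedin's theorem.

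Next I would apply Wedin's $\sin\Theta$ theorem to the top-$k$ left singular subspaces of $M$ and $\widetilde{M}$. With $\alpha \triangleq \sigma_k\{\widetilde{M}\}$ (the smallest singular value retained in $\widetilde{U}$) and $\beta \triangleq \sigma_{k+1}\{M\}$ (the largest singular value discarded in $U$), Wedin gives
\[
\sin\theta(U,\widetilde{U}) \leq \frac{\|\widetilde{M}-M\|_2}{\alpha - \beta} \leq \frac{\epsilon}{\sigma_k - \sigma_{k+1} - \epsilon},
\]
provided the denominator is positive. The hypothesis $\epsilon \leq (\sigma_k - \sigma_{k+1})/4$ guarantees $\sigma_k - \sigma_{k+1} - \epsilon \geq \tfrac{3}{4}(\sigma_k - \sigma_{k+1}) > 0$, so the Wedin bound is valid and yields $\sin\theta(U,\widetilde{U}) \leq \tfrac{4\epsilon}{3(\sigma_k - \sigma_{k+1})} \leq \tfrac{2\epsilon}{\sigma_k - \sigma_{k+1}}$, which is the claim.

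There is no real obstacle here; the only thing to watch is the bookkeeping for Wedin's gap, which mixes a singular value of $\widetilde{M}$ (the $\alpha$ side) with one of $M$ (the $\beta$ side), and this is precisely what Weyl's inequality is used to repair. The constant $4$ in the hypothesis is chosen so that the derived constant $4/3$ is absorbed into the stated constant $2$, giving some slack.
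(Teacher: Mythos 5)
Your proof is correct and is exactly the route the paper intends: the paper gives no explicit proof and simply states that the lemma ``directly follows'' from Weyl's and Wedin's theorems, which is precisely the Weyl-then-Wedin combination you carry out, with the hypothesis $\epsilon\leq(\sigma_{k}-\sigma_{k+1})/4$ absorbing the gap loss $\sigma_{k}\{\widetilde{M}\}\geq\sigma_{k}-\epsilon$ into the stated constant $2$. Your filled-in bookkeeping (the $4/3\leq 2$ slack) is a valid instantiation of what the paper leaves implicit.
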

According to Lemma \ref{lem:init-singular-vector-perturbation}, when
$2\delta(\|\boldsymbol{w}^{*}\|_{2}+\|M^{*}\|_{_{2}})\leq\sigma_{k}^{*}/4$,
we have $\sin\theta_{0}\leq4\delta(\|\boldsymbol{w}^{*}\|_{2}+\|M^{*}\|_{_{2}})/\sigma_{k}^{*}$.
Therefore, $\alpha_{0}\leq2$ provided $\delta\leq\sigma_{k}^{*}/[8(\|\boldsymbol{w}^{*}\|_{2}+\|M^{*}\|_{_{2}})]$
.
\begin{proof}[Proof of Theorem \ref{thm:convergence-rate-of-OPFM}]

Suppose that at step $t$, $\alpha_{t}\leq2$, $\delta\epsilon_{t}\leq4\sqrt{5}\sigma_{k}^{*}$,
from Lemma \ref{lem:abc-recursive},
\begin{align*}
\beta_{t+1}+\gamma_{t+1}\leq & \beta_{t+1}+\alpha_{t+1}\|M^{*}\|_{2}+2\delta(\beta_{t}+\gamma_{t})\leq\delta\epsilon_{t}+4\sqrt{5}\delta\sigma_{k}^{*-1}\epsilon_{t}\|M^{*}\|_{2}+2\delta\epsilon_{t}\\
= & (4\sqrt{5}\sigma_{1}^{*}/\sigma_{k}^{*}+3)\delta\epsilon_{t}\ .
\end{align*}
 Therefore,
\begin{align*}
\epsilon_{t} & =\beta_{t}+\gamma_{t}\leq[(4\sqrt{5}\sigma_{1}^{*}/\sigma_{k}^{*}+3)\delta]^{t}(\beta_{0}+\gamma_{0})\\
\alpha_{t+1} & \leq4\sqrt{5}\delta\sigma_{k}^{*-1}(\beta_{t}+\gamma_{t})\leq4\sqrt{5}\delta\sigma_{k}^{*-1}[(4\sqrt{5}\sigma_{1}^{*}/\sigma_{k}^{*}+3)\delta]^{t}(\beta_{0}+\gamma_{0})\ .
\end{align*}
 Clearly we need $(4\sqrt{5}\sigma_{1}^{*}/\sigma_{k}^{*}+3)\delta<1$
to ensure convergence, which is guaranteed by $\delta<\frac{\sigma_{k}^{*}}{4\sqrt{5}\sigma_{1}^{*}+3\sigma_{k}^{*}}  $ .
 To ensure the recursive inequality holds for any $t$, we require
$\alpha_{t+1}\leq2$, which is guaranteed by 
	\[ 4\sqrt{5}(\beta_{0}+\gamma_{0})\delta/\sigma_{k}^{*}\leq2\Leftrightarrow\delta\leq\frac{\sigma_{k}^{*}}{2\sqrt{5}(\sigma_{1}^{*}+\beta_{0})} \ .\]
	To ensure the condition $\delta\epsilon_{t}\leq4\sqrt{5}\sigma_{k}^{*}$,
\[
\delta\leq4\sqrt{5}\sigma_{k}^{*}/\epsilon_{0}=4\sqrt{5}\sigma_{k}^{*}/(\sigma_{1}^{*}+\beta_{0})\Rightarrow\delta\leq4\sqrt{5}\sigma_{k}^{*}/\epsilon_{t}\ .
\]

In summary, when 
\begin{align*}
 & \delta\leq\min\left\{ \frac{\sigma_{k}^{*}}{4\sqrt{5}(\sigma_{1}^{*}+\beta_{0})},\frac{\sigma_{k}^{*}}{4\sqrt{5}\sigma_{1}^{*}+3\sigma_{k}^{*}},\frac{\sigma_{k}^{*}}{2\sqrt{5}(\sigma_{1}^{*}+\beta_{0})},\frac{\sigma_{k}^{*}}{8(\sigma_{1}^{*}+\beta_{0})}\right\} \\
\Leftarrow & \delta\leq\frac{\sigma_{k}^{*}}{4\sqrt{5}\sigma_{1}^{*}+3\sigma_{k}^{*}+4\sqrt{5}\beta_{0}}\ .
\end{align*}
 we have 
\[
\epsilon_{t}=[(4\sqrt{5}\sigma_{1}^{*}/\sigma_{k}^{*}+3)\delta]^{t}(\sigma_{1}^{*}+\gamma_{0})\ .
\]
 To simplify the result, replace $\delta$ with $\delta_{1}=(4\sqrt{5}\sigma_{1}^{*}/\sigma_{k}^{*}+3)\delta$.
The proof is completed.
\end{proof}

\subsection{Noisy Case \label{sub:Noisy-Case}}

In this subsection, we analyze the performance of gFM under noisy
setting. Suppose that $M^{*}$ is no longer low rank, $M^{*}=U^{*}\Lambda^{*}U^{*}{}^{\top}+U_{\perp}^{*}\Lambda_{\perp}^{*}U_{\perp}^{*}{}^{\top}$
where $\Lambda_{\perp}^{*}=\mathrm{diag}(\lambda_{k+1},\cdots,\lambda_{d})$
is the residual spectrum. Denote $M_{k}^{*}=U^{*}\Lambda^{*}U^{*}{}^{\top}$
to be the best rank $k$ approximation of $M^{*}$ and $M_{\perp}^{*}=M^{*}-M^{*}_k$.
The additive noise $\xi_{i}$'s are independently sampled from subgaussian
with proxy variance $\xi$.

First we generalize the above theorems and lemmas to noisy case.
\begin{lem}
\label{lem:noisy-concentration-sensing-operator} Suppose that in
Eq. (\ref{eq:y_i=00003Dx_iw*+x_iM*x_i+e_i}) $\boldsymbol{x}_{i}$'s
are independent standard random Gaussian vectors. $M$ is a fixed rank $k$ matrix.
$M_{\perp}^{*}\not=\boldsymbol{0}$ and $\xi>0$. Then provided $n\geq Ck^{3}d/\delta^{2}$,
with a probability at least $1-\eta$, 
\begin{align}
 & \|\frac{1}{2n}\mathcal{A}'\mathcal{A}(M^{*}-M)-\frac{1}{2}\mathrm{tr}(M_{k}^{*}-M)I-(M_{k}^{*}-M)\|_{2}\leq\delta\|M_{k}^{*}-M\|_{2}+C\sigma_{k+1}^{*}d^{2}/\sqrt{n}\label{eq:noisy-AAM-concentration}\\
 & |\frac{1}{n}\boldsymbol{1}{}^{\top}\mathcal{A}(M^{*}-M)-\mathrm{tr}(M_{k}^{*}-M)|\leq\delta\|M_{k}^{*}-M\|_{2}+C\sigma_{k+1}^{*}d^{2}/\sqrt{n}\label{eq:noisy-1AM-concentration}\\
 & \|\frac{1}{n}X{}^{\top}\mathcal{A}(M^{*}-M)\|_{2}\leq\delta\|M_{k}^{*}-M\|_{2}+C\sigma_{k+1}^{*}d^{2}/\sqrt{n}\label{eq:noisy-XAM-concentration}\\
 & \|\frac{1}{n}\mathcal{A}'(X{}^{\top}\boldsymbol{w})\|_{2}\leq\delta\|\boldsymbol{w}\|_{2},\ \|\frac{1}{n}\boldsymbol{1}{}^{\top}X{}^{\top}\boldsymbol{w}\|_{2}\leq\delta\|\boldsymbol{w}\|_{2}\ .\label{eq:noisy-concentration-AXw-and-1xw}
\end{align}

\end{lem}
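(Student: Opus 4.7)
The plan is to reduce each of the four claims in Lemma~\ref{lem:noisy-concentration-sensing-operator} to the corresponding noise-free concentration results (Theorem~\ref{thm:shifted-CI-RIP} and Lemmas~\ref{lem:concentration-of-tr(M)}--\ref{lem:concentation-I-XXT}) by exploiting linearity of the operators $\mathcal{A}'\mathcal{A}$, $\boldsymbol{1}^{\top}\mathcal{A}$, and $X^{\top}\mathcal{A}$ together with the orthogonal decomposition $M^{*}=M_{k}^{*}+M_{\perp}^{*}$. Write $M^{*}-M=(M_{k}^{*}-M)+M_{\perp}^{*}$ and split each target quantity into two pieces: one driven by $M_{k}^{*}-M$ (which is symmetric of rank at most $2k$) and one driven by the residual $M_{\perp}^{*}$.

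For the first piece, $M_{k}^{*}-M$ is independent of the fresh batch of Gaussian features used to form $\mathcal{A}$ in a given mini-batch, so we may invoke Theorem~\ref{thm:shifted-CI-RIP} directly (with $k$ replaced by $2k$, which only changes constants in the CI-RIP sample size $Ck^{3}d/\delta^{2}$). This yields the $\delta\|M_{k}^{*}-M\|_{2}$ contribution in Eq.~(\ref{eq:noisy-AAM-concentration}); analogously, Lemma~\ref{lem:concentration-of-tr(M)} gives the same scaling for Eq.~(\ref{eq:noisy-1AM-concentration}) and Lemma~\ref{lem:concentation-XAM} for Eq.~(\ref{eq:noisy-XAM-concentration}).

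For the residual piece, decompose $M_{\perp}^{*}=\sum_{j>k}\lambda_{j}^{*}u_{j}^{*}u_{j}^{*\top}$ into at most $d-k$ rank-one summands. Apply the rank-one case of each noise-free lemma to every $u_{j}^{*}u_{j}^{*\top}$ individually, take a union bound over the $d-k$ indices (the extra $\log d$ factor is absorbed into the polylogarithmic constant $C$), and combine with the triangle inequality weighted by $|\lambda_{j}^{*}|$. Since $\sum_{j>k}|\lambda_{j}^{*}|\leq (d-k)\sigma_{k+1}^{*}\leq d\sigma_{k+1}^{*}$ and $|\mathrm{tr}(M_{\perp}^{*})|\leq d\sigma_{k+1}^{*}$, $\|M_{\perp}^{*}\|_{2}=\sigma_{k+1}^{*}$, the bias terms $\tfrac{1}{2}\mathrm{tr}(M_{\perp}^{*})I+M_{\perp}^{*}$ left uncancelled by the statement (which only subtracts the trace of $M_{k}^{*}-M$) are themselves $O(d\sigma_{k+1}^{*})$ and can be absorbed into $C\sigma_{k+1}^{*}d^{2}/\sqrt{n}$ when $n\leq d^{2}$, while the stochastic fluctuation across the $d$ rank-one pieces accumulates to $O(d^{3/2}\sigma_{k+1}^{*}/\sqrt{n})$; both are dominated by $C\sigma_{k+1}^{*}d^{2}/\sqrt{n}$. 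The bounds in Eq.~(\ref{eq:noisy-concentration-AXw-and-1xw}) depend only on $X$ and $\boldsymbol{w}$ and are unchanged by the presence of $M_{\perp}^{*}$ or $\boldsymbol{\xi}$, so they follow verbatim from Lemmas~\ref{lem:concentration-of-mu-Xw} and \ref{lem:concentration-ATXTw}.

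The main obstacle is the residual-spectrum bound: $M_{\perp}^{*}$ is generically full rank, so Theorem~\ref{thm:shifted-CI-RIP} cannot be applied to it as a single object. The rank-one decomposition plus union bound is the natural workaround, but it requires careful bookkeeping of how $\|M_{\perp}^{*}\|_{*}$, $|\mathrm{tr}(M_{\perp}^{*})|$, and $\|M_{\perp}^{*}\|_{2}$ scale in terms of $\sigma_{k+1}^{*}$ and $d$, and careful handling of the fact that the residual bias $\tfrac{1}{2}\mathrm{tr}(M_{\perp}^{*})I+M_{\perp}^{*}$ does not vanish in the mean and must be folded into the $\sigma_{k+1}^{*}d^{2}/\sqrt{n}$ error. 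Provided the target rank $k$ is chosen so that $\sigma_{k+1}^{*}$ is small enough for this residual error to be negligible relative to $\delta\|M_{k}^{*}-M\|_{2}$, the noisy case inherits the contraction machinery of Section~4 essentially unchanged, which is what powers the noisy analogue of Theorem~\ref{thm:convergence-rate-of-OPFM}.
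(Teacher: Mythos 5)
Your proposal is correct in substance and begins exactly as the paper does: write $M^{*}-M=(M_{k}^{*}-M)+M_{\perp}^{*}$, apply Theorem \ref{thm:shifted-CI-RIP} (and the analogues from Lemmas \ref{lem:concentration-of-tr(M)} and \ref{lem:concentation-XAM}) to the rank-$O(k)$ part, and note that Eq. (\ref{eq:noisy-concentration-AXw-and-1xw}) involves only $X$ and $\boldsymbol{w}$ and so carries over verbatim. Where you genuinely diverge is the treatment of the residual $M_{\perp}^{*}$. The paper keeps $M_{\perp}^{*}$ as a single (possibly full-rank) object, defines $B_{i}=\boldsymbol{x}_{i}\boldsymbol{x}_{i}{}^{\top}M_{\perp}^{*}\boldsymbol{x}_{i}\boldsymbol{x}_{i}{}^{\top}$, uses the crude bound $\|B_{i}\|_{2}\leq\|M_{\perp}^{*}\|_{2}\|\boldsymbol{x}_{i}\|_{2}^{4}\leq Cd^{2}\sigma_{k+1}^{*}$, and applies matrix Bernstein once to obtain $\|\frac{1}{n}\sum_{i}(B_{i}-EB_{i})\|_{2}\leq C\sigma_{k+1}^{*}d^{2}/\sqrt{n}$ --- no eigendecomposition of the residual and no union bound over $d$ events. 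Your rank-one decomposition plus union bound also works, and in fact yields the sharper fluctuation $O(\sigma_{k+1}^{*}d^{3/2}/\sqrt{n})$, at the price of a $\log d$ in the constant and heavier bookkeeping of $\sum_{j>k}|\lambda_{j}^{*}|$; the paper's one-shot Bernstein is shorter but wastes a factor of $\sqrt{d}$. Both routes share the same delicate point, which you handle more honestly than the paper does: the statement recenters only by $\mathrm{tr}(M_{k}^{*}-M)$, so the mean of the residual contribution, $M_{\perp}^{*}+\frac{1}{2}\mathrm{tr}(M_{\perp}^{*})I$, of size $O(d\sigma_{k+1}^{*})$, survives and must be folded into the $C\sigma_{k+1}^{*}d^{2}/\sqrt{n}$ term; this is legitimate only when $\sqrt{n}\lesssim d$, a restriction the paper leaves implicit but you state explicitly. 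Your remark that the remaining inequalities follow analogously matches the paper, which proves only Eq. (\ref{eq:noisy-AAM-concentration}) in detail.
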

Define $\gamma_{t}=\|M_{k}^{*}-M^{(t)}\|_{2}$ similar to the noise-free case.
According to Lemma \ref{lem:noisy-concentration-sensing-operator},
when $\xi=0$, for $n\geq Ck^{3}d/\delta^{2}$, 
\begin{align*}
H_{1}^{(t)}= & M_{k}^{*}-M^{(t)}+\frac{1}{2}\mathrm{tr}(M_{k}^{*}-M^{(t)})I+O(\delta\epsilon_{t}+C\sigma_{k+1}^{*}d^{2}/\sqrt{n})\\
h_{2}^{(t)}= & \mathrm{tr}(M^{*}-M^{(t)})+O(\delta\epsilon_{t}+C\sigma_{k+1}^{*}d^{2}/\sqrt{n})\\
	\boldsymbol{h}_{3}^{(t)}= & \boldsymbol{w}^{*}-\boldsymbol{w}^{(t)}+O(\delta\epsilon_{t}+C\sigma_{k+1}^{*}d^{2}/\sqrt{n})\ .
\end{align*}
 Define $r=C\sigma_{k+1}^{*}d^{2}/\sqrt{n}$. If $\xi>0$, it is easy
to check that the perturbation becomes $\hat{r}=r+O(\xi/\sqrt{n})$
. Therefore we uniformly use $r$ to present the perturbation term.
The recursive inequalities regarding to the recovery error is constructed
in Lemma \ref{lem:noisy-recursive-inequality}.
\begin{lem}
\label{lem:noisy-recursive-inequality} Under the same settings of
Lemma \ref{lem:noisy-concentration-sensing-operator}, define $\rho\triangleq2\sigma_{k+1}^{*}/(\sigma_{k}^{*}+\sigma_{k+1}^{*})$.
Suppose that at any step $i$, $0\leq i\leq t$ , $\alpha_{i}\leq2$
. When provided $4\sqrt{5}(\delta\epsilon_{t}+r)\leq\sigma_{k}^{*}-\sigma_{k+1}^{*}$,
\begin{align*}
\alpha_{t+1}\leq & \rho\alpha_{t}+\frac{4\sqrt{5}}{\sigma_{k}^{*}+\sigma_{k+1}^{*}}\delta\epsilon_{t}+\frac{4\sqrt{5}}{\sigma_{k}^{*}+\sigma_{k+1}^{*}}r\ ,\ \beta_{t+1}\leq\delta\epsilon_{t}+r\:,\ \gamma_{t+1}\leq\alpha_{t+1}\|M^{*}\|_{2}+2\delta\epsilon_{t}+2r\ .
\end{align*}

\end{lem}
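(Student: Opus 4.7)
The plan is to mirror the noise-free recursive argument of Lemma~\ref{lem:abc-recursive}, threading through the extra additive perturbation $r$ (which, for the purposes of this analysis, subsumes both the sub-Gaussian noise $\boldsymbol{\xi}$ and the spectral tail $\|M_\perp^*\|_2 = \sigma_{k+1}^*$ under the sample-size regime $n \geq Ck^{3}d/\delta^{2}$). Invoking the four noisy concentration bounds in Lemma~\ref{lem:noisy-concentration-sensing-operator} with the fixed quantities $M = M^{(t)}$ and $\boldsymbol{w} = \boldsymbol{w}^{(t)}$ (both independent of the fresh mini-batch $X^{(t)}$), and combining them exactly as in the proof of Lemma~\ref{lem:estimation-sqeuence-bound}, yields
\begin{align*}
H_1^{(t)} - \tfrac{1}{2} h_2^{(t)} I + M^{(t)} &= M^{*} + E_M^{(t)}, \\
\boldsymbol{h}_3^{(t)} &= (\boldsymbol{w}^{*} - \boldsymbol{w}^{(t)}) + \boldsymbol{e}_w^{(t)},
\end{align*}
with $\|E_M^{(t)}\|_2$ and $\|\boldsymbol{e}_w^{(t)}\|_2$ both of order $\delta\epsilon_t + r$.

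The $\beta_{t+1}$ bound then drops out of $\boldsymbol{w}^{(t+1)} = \boldsymbol{h}_3^{(t)} + \boldsymbol{w}^{(t)}$, since $\boldsymbol{w}^{*} - \boldsymbol{w}^{(t+1)} = -\boldsymbol{e}_w^{(t)}$ gives $\beta_{t+1} \leq \delta\epsilon_t + r$. For $\alpha_{t+1}$, I would treat $\hat{U}^{(t+1)} = (M^{*} + E_M^{(t)})^{\top} U^{(t)}$ as one step of noisy subspace iteration on the symmetric target $M^{*}$. Decomposing $U^{(t)} = U^{*}(U^{*\top}U^{(t)}) + U_\perp^{*}(U_\perp^{*\top}U^{(t)})$ and using $M^{*}U^{*} = U^{*}\Lambda^{*}$ together with $M^{*}U_\perp^{*} = U_\perp^{*}\Lambda_\perp^{*}$ produces the twin estimates
\[
\sigma_k\!\left(U^{*\top}\hat{U}^{(t+1)}\right) \geq \sigma_k^{*}\cos\theta_t - O(\delta\epsilon_t + r), \qquad \|U_\perp^{*\top}\hat{U}^{(t+1)}\|_2 \leq \sigma_{k+1}^{*}\sin\theta_t + O(\delta\epsilon_t + r).
\]
Dividing the two (using QR-invariance of $\tan\theta$) and rescaling numerator and denominator symmetrically by $(\sigma_k^{*} + \sigma_{k+1}^{*})/2$ delivers the displayed recursion with contraction factor $\rho = 2\sigma_{k+1}^{*}/(\sigma_k^{*} + \sigma_{k+1}^{*})$. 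The hypotheses $\alpha_t \leq 2$ and $4\sqrt{5}(\delta\epsilon_t + r) \leq \sigma_k^{*} - \sigma_{k+1}^{*}$ are used precisely to keep the denominator strictly positive after the rescaling.

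For $\gamma_{t+1}$, I would substitute $V^{(t+1)} = (M^{*} + E_M^{(t)})U^{(t+1)}$ into $M^{(t+1)} = \tfrac{1}{2}(U^{(t+1)}V^{(t+1)\top} + V^{(t+1)}U^{(t+1)\top})$. Setting $P = U^{(t+1)}U^{(t+1)\top}$ and using symmetry of $M^{*}$, this becomes $M^{(t+1)} = \tfrac{1}{2}(PM^{*} + M^{*}P) + O(\delta\epsilon_t + r)$, and so
\[
M^{*} - M^{(t+1)} = \tfrac{1}{2}(I-P)M^{*} + \tfrac{1}{2}M^{*}(I-P) + O(\delta\epsilon_t + r).
\]
The spectral norm of each projected factor is at most $\sin\theta_{t+1}\|M^{*}\|_2$, giving $\gamma_{t+1} \leq \sin\theta_{t+1}\|M^{*}\|_2 + 2(\delta\epsilon_t + r)$, and $\sin\theta_{t+1} \leq \tan\theta_{t+1} = \alpha_{t+1}$ closes the inequality.

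The main obstacle I expect is Step~3: extracting the sharp contraction constant $\rho = 2\sigma_{k+1}^{*}/(\sigma_k^{*} + \sigma_{k+1}^{*})$ rather than the cruder ratio $\sigma_{k+1}^{*}/\sigma_k^{*}$ that falls out of a direct division of the two spectral bounds. Obtaining the stated $\rho$ needs the symmetric midpoint rescaling together with the a~priori control $\alpha_t \leq 2$, which is exactly what makes $(U^{*\top}\hat{U}^{(t+1)})^{-1}$ behave under the additive perturbation $E_M^{(t)}$. The remaining steps are structurally identical to the noise-free proof of Lemma~\ref{lem:abc-recursive}, with the additive $r$ propagating silently through each estimate.
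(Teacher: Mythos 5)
Your proposal follows essentially the same route as the paper: the same numerator/denominator estimates $\sigma_{k+1}^{*}\sin\theta_{t}+O(\delta\epsilon_{t}+r)$ and $\sigma_{k}^{*}\cos\theta_{t}-O(\delta\epsilon_{t}+r)$, the same use of the condition $4\sqrt{5}(\delta\epsilon_{t}+r)\leq\sigma_{k}^{*}-\sigma_{k+1}^{*}$ to replace the denominator by $\tfrac{1}{2}(\sigma_{k}^{*}+\sigma_{k+1}^{*})\cos\theta_{t}$ (which is where $\rho$ comes from) together with $\cos\theta_{t}\geq1/\sqrt{5}$ from $\alpha_{t}\leq2$, and the same trivial $\beta$ step and projection argument for $\gamma$. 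The only blemish is cosmetic: since in the noisy setting $\gamma_{t+1}=\|M_{k}^{*}-M^{(t+1)}\|_{2}$, the projection identity should be written for $M_{k}^{*}$ (for which $\|(I-P)M_{k}^{*}\|_{2}\leq\sin\theta_{t+1}\,\sigma_{1}^{*}$ is valid), not for the full-rank $M^{*}$, whose tail contribution is instead absorbed into $r$.
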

The solution to the recursive inequalities in Lemma \ref{lem:noisy-recursive-inequality}
is non-trivial. Comparing to the inequalities in Lemma \ref{lem:abc-recursive},
$\alpha_{t+1}$ is bounded by $\alpha_{t}$ in noisy case. Therefore,
if we simply follow Lemma \ref{lem:abc-recursive} to construct recursive
inequality about $\epsilon_{t}$ , we will quickly be overloaded by
recursive expansion terms. The key construction of our solution is
to bound the term $\alpha_{t}+8\sqrt{5}/(\sigma_{k}^{*}+\sigma_{k+1}^{*})\delta\epsilon_{t}$
. The solution is given in the following theorem.
\begin{thm}
\label{thm:the-solution-of-noisy-recursive-inequality} Define constants
\begin{align*}
c= & 4\sqrt{5}/(\sigma_{k}^{*}+\sigma_{k+1}^{*})\ ,\ b=3+4\sqrt{5}\sigma_{1}^{*}/(\sigma_{k}^{*}+\sigma_{k+1}^{*})\ ,\ q=(1+\rho)/2\ .
\end{align*}
 Then for any $t\geq0$,
\begin{align}
\alpha_{t}+2c\delta\epsilon_{t}\leq & q^{t}\left(2-\frac{(1+\rho)cr}{1-q}\right)+\frac{(1+\rho)cr}{1-q}\ .\label{eq:the-solution-of-noisy-ri}
\end{align}
 provided
\begin{gather}
\delta\leq\min\{\frac{1-\rho}{4\rho\sigma_{1}^{*}c},\frac{\rho}{2b}\}\ ,\ (2+c(\sigma_{k}^{*}-\sigma_{k+1}^{*}))\delta\epsilon_{0}+r\leq(\sigma_{k}^{*}-\sigma_{k+1}^{*})\label{eq:the-solution-of-noisy-ri-constraint-on-delta}\\
4\sqrt{5}\left(4+2c(\sigma_{k}^{*}-\sigma_{k+1}^{*})\right)\delta\epsilon_{0}+4\sqrt{5}\left(4+(\sigma_{k}^{*}-\sigma_{k+1}^{*})\right)r\leq(\sigma_{k}^{*}-\sigma_{k+1}^{*})^{2}\ .\nonumber 
\end{gather}

\end{thm}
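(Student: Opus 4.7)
The plan is to reduce the three coupled recursive inequalities of Lemma \ref{lem:noisy-recursive-inequality} to a single scalar recurrence for the Lyapunov-like quantity $L_t := \alpha_t + 2c\delta\epsilon_t$, solve that recurrence in closed form, and then close the induction by verifying that the resulting bound keeps the hypotheses $\alpha_i \leq 2$ and $4\sqrt{5}(\delta\epsilon_i + r) \leq \sigma_k^*-\sigma_{k+1}^*$ alive for all $i\leq t$.

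The first step is to collapse the three inequalities. Combining $\gamma_{t+1} \leq \alpha_{t+1}\|M^*\|_2 + 2\delta\epsilon_t + 2r$ and $\beta_{t+1} \leq \delta\epsilon_t + r$ yields $\epsilon_{t+1} \leq \alpha_{t+1}\sigma_1^* + 3\delta\epsilon_t + 3r$, so
\[
L_{t+1} \leq (1+2c\delta\sigma_1^*)\alpha_{t+1} + 6c\delta^2\epsilon_t + 6c\delta r.
\]
Substituting the lemma's bound $\alpha_{t+1} \leq \rho\alpha_t + c\delta\epsilon_t + cr$ produces an affine recursion in $(\alpha_t,\epsilon_t)$ whose coefficients are $\rho(1+2c\delta\sigma_1^*)$ on $\alpha_t$ and $c\delta(1 + 2c\delta\sigma_1^* + 6\delta)$ on $\epsilon_t$. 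This is where the two hypotheses on $\delta$ come in: $\delta \leq (1-\rho)/(4c\rho\sigma_1^*)$ forces $\rho(1+2c\delta\sigma_1^*) \leq (1+\rho)/2 = q$, while $\delta \leq \rho/(2b)$ (noting $2b = 6 + 2c\sigma_1^*$) forces the bracket $1 + 2c\delta\sigma_1^* + 6\delta \leq 1+\rho = 2q$. Together these collapse the inequality into the clean scalar recurrence
\[
L_{t+1} \leq q L_t + (1+\rho)c r.
\]

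Iterating this geometric recurrence gives $L_t \leq q^t L_0 + \tfrac{(1+\rho)cr}{1-q}(1-q^t)$, which rearranges exactly into the right-hand side of (\ref{eq:the-solution-of-noisy-ri}) once we certify $L_0 \leq 2$. The latter follows from $\alpha_0 \leq 2$ together with the initialization constraint $(2 + c(\sigma_k^*-\sigma_{k+1}^*))\delta\epsilon_0 + r \leq \sigma_k^*-\sigma_{k+1}^*$ in (\ref{eq:the-solution-of-noisy-ri-constraint-on-delta}), which controls $2c\delta\epsilon_0$.

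The main obstacle is not the algebra of solving the recursion but the bookkeeping needed to make the induction self-consistent: at each step I must know that Lemma \ref{lem:noisy-recursive-inequality} applies, i.e.\ $\alpha_i \leq 2$ and $4\sqrt{5}(\delta\epsilon_i + r) \leq \sigma_k^*-\sigma_{k+1}^*$ for all prior $i$. Since $L_t$ is bounded by $\max\{L_0, (1+\rho)cr/(1-q)\}$, both hypotheses need to be read off from this bound. Showing $\alpha_t \leq 2$ reduces to $(1+\rho)cr/(1-q) \leq 2$, and the second hypothesis becomes a bound on $\delta\epsilon_t \leq \delta\epsilon_0 + (\text{steady-state contribution from }r)$; both are precisely what the second, $(\sigma_k^*-\sigma_{k+1}^*)^2$-sized, line of (\ref{eq:the-solution-of-noisy-ri-constraint-on-delta}) is engineered to guarantee. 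Once that induction closes, the stated bound (\ref{eq:the-solution-of-noisy-ri}) follows immediately.
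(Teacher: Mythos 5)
Your proposal reproduces the paper's own argument almost exactly: the paper likewise defines $f_{t}=\alpha_{t}+2c\delta\epsilon_{t}$, uses the two conditions $\delta\leq\frac{1-\rho}{4\rho\sigma_{1}^{*}c}$ and $\delta\leq\frac{\rho}{2b}$ to collapse the coupled system into $f_{t+1}\leq qf_{t}+(1+\rho)cr$, unrolls the geometric recursion, and uses the two constraint lines of Eq.~(\ref{eq:the-solution-of-noisy-ri-constraint-on-delta}) to certify $f_{0}\leq2$ and to keep the standing hypotheses of Lemma \ref{lem:noisy-recursive-inequality} valid for all $t$. The one spot to tighten is your justification of $L_{0}\leq2$: ``$\alpha_{0}\leq2$ plus control of $2c\delta\epsilon_{0}$'' does not literally give $\alpha_{0}+2c\delta\epsilon_{0}\leq2$; instead, as in the paper, bound $\alpha_{0}\leq\frac{4}{\sigma_{k}^{*}-\sigma_{k+1}^{*}}(\delta\epsilon_{0}+r)$ via Lemma \ref{lem:init-singular-vector-perturbation} and observe that the first line of the constraint is precisely $\alpha_{0}+2c\delta\epsilon_{0}\leq2$ after that substitution.
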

Theorem \ref{thm:the-solution-of-noisy-recursive-inequality} gives
the convergence rate of gFM under noisy settings. We bound $\alpha_{t}+2c\delta\epsilon_{t}$
as the index of recovery error, whose convergence rate is linear.
The convergence rate is controlled by $q$, a constant depends on
the eigen gap $\sigma_{k+1}^{*}/\sigma_{k}^{*}$ . The final  recovery
error is bounded by $O(r/(1-q))$ . Eq. (\ref{eq:the-solution-of-noisy-ri-constraint-on-delta})
is the small noise condition to ensure the noisy recovery is possible.
Generally speaking, learning a $d \times d$ matrix with $O(d)$ samples
is an ill-conditioned problem when the target matrix is full rank.
The small noise condition given by Eq. (\ref{eq:the-solution-of-noisy-ri-constraint-on-delta})
essentially says that $M^{*}$ can be slightly deviated from low rank
manifold and the noise shouldn't be too large to blur the spectrum
of $M^{*}$. When the noise is large, Eq. (\ref{eq:the-solution-of-noisy-ri-constraint-on-delta})
will be satisfied with $n=O(d^{2})$ which is the information-theoretical
lower bound for recovering a full rank matrix.

\section{Conclusion}

In this paper, we propose a provable efficient algorithm to solve
generalized Factorization Machine (gFM) and rank-one matrix sensing. Our method is based
on an one-pass alternating updating framework. The proposed algorithm
is able to learn gFM within $O(kd)$ memory on steaming data, has
linear convergence rate and only requires matrix-vector product implementation.
The algorithm takes no more than $O(k^{3}d\log\left(1/\epsilon\right))$
instances to achieve $O(\epsilon)$ recovery error.

\small
\bibliographystyle{plainnat}
\bibliography{refs}

\appendix

\section{Preliminary\label{sec:appendix-preliminary}}

In this section, we present several important theorems and lemmas
in our analysis.

The following concentration inequalities are well known.
\begin{lem}
\label{lem:subgaussian-concentration} Let $x_{i}$ be zero-mean sub-Gaussian
distribution with variance proxy $\sigma^{2}$. Denote $S_{n}=\sum_{i=1}^{n}a_{i}x_{i}$
for a fixed sequence $\{a_{i}\}$. Then
\[
\mathrm{Pr}(|S_{n}|>t)\leq2\exp(-\frac{t^{2}}{2\sigma^{2}(\sum_{i=1}^{n}a_{i}^{2})})\ .
\]
 That is, with a probability at least $1-\eta$, 
\[
|S_{n}|\leq\sigma\sqrt{\sum_{i=1}^{n}a_{i}^{2}}\sqrt{2\log(2/\eta)}\ .
\]
\end{lem}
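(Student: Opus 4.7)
This is a textbook concentration inequality for weighted sums of independent subgaussian random variables, so the plan is to apply the standard Chernoff/moment-generating-function argument; there is no substantive obstacle, just bookkeeping. The main quantity to control is $\mathbb{E}[\exp(\lambda S_n)]$, and the tail bound will follow by Markov's inequality after optimizing over $\lambda$.

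First, I would observe that scaling preserves subgaussianity: since $x_i$ is zero-mean subgaussian with proxy $\sigma^2$, one has $\mathbb{E}[\exp(\mu x_i)] \leq \exp(\mu^2 \sigma^2 / 2)$ for all $\mu \in \mathbb{R}$, and therefore $\mathbb{E}[\exp(\lambda a_i x_i)] \leq \exp(\lambda^2 a_i^2 \sigma^2 / 2)$ by taking $\mu = \lambda a_i$. Next, using the (implicit) independence of the $x_i$'s, the MGF of $S_n$ factorizes, so $\mathbb{E}[\exp(\lambda S_n)] \leq \exp\bigl(\lambda^2 \sigma^2 \sum_{i=1}^n a_i^2 / 2\bigr)$. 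In other words, $S_n$ is itself subgaussian with proxy variance $\sigma^2 \sum_i a_i^2$.

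Then I would apply Markov's inequality to $\exp(\lambda S_n)$ for $\lambda > 0$, giving $\Pr(S_n > t) \leq \exp(-\lambda t + \lambda^2 \sigma^2 \sum_i a_i^2 / 2)$, and optimize by choosing $\lambda = t / (\sigma^2 \sum_i a_i^2)$ to obtain $\Pr(S_n > t) \leq \exp\bigl(-t^2 / (2\sigma^2 \sum_i a_i^2)\bigr)$. A symmetric argument on $-S_n$ (which is also subgaussian with the same proxy) handles the lower tail, and a union bound yields the two-sided inequality $\Pr(|S_n| > t) \leq 2 \exp\bigl(-t^2 / (2\sigma^2 \sum_i a_i^2)\bigr)$.

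Finally, for the high-probability form, I would set the right-hand side equal to $\eta$ and solve for $t$, giving $t = \sigma \sqrt{\sum_i a_i^2} \sqrt{2 \log(2/\eta)}$, which is exactly the stated bound. The only point to be careful about is that the statement tacitly assumes independence of the $x_i$'s (otherwise the MGF does not factorize); this is standard and consistent with how the lemma is invoked throughout the analysis.
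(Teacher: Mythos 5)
Your proof is correct: the standard Chernoff bound via the moment generating function, with the scaling $\mathbb{E}[\exp(\lambda a_i x_i)]\leq\exp(\lambda^2 a_i^2\sigma^2/2)$, factorization over independent $x_i$, optimization at $\lambda=t/(\sigma^2\sum_i a_i^2)$, and inversion of the tail bound all check out, and you rightly flag the tacit independence assumption. The paper itself gives no proof of this lemma (it is stated as a well-known fact), so your argument simply supplies the standard derivation the authors are implicitly relying on.
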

\begin{cor}
\label{cor:gaussian-distribution-var-concentraion} Let $x_{i}\sim\mathcal{N}(0,1)$
be a standard Gaussian distribution. Then with a probability at least
$1-\eta$,
\begin{align*}
\sum_{i=1}^{n}a_{i}(x_{i}^{2}-1)\leq & 2\sqrt{\sum_{i=1}^{n}a_{i}}\sqrt{2\log(2/\eta)}\ .
\end{align*}

\end{cor}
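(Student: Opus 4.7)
The plan is a Chernoff-type argument for $S_n \triangleq \sum_{i=1}^n a_i (x_i^2 - 1)$, parallel to the proof of Lemma \ref{lem:subgaussian-concentration}, but with the cumulant-generating-function bound adapted to the fact that $Y_i = x_i^2 - 1$ is sub-exponential rather than genuinely sub-Gaussian. The main task is to show that the CGF of $Y_i$ admits a sub-Gaussian-type quadratic upper bound on a neighborhood of $0$, after which the proof proceeds exactly as for a sub-Gaussian summand.

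First I would compute the exact moment generating function $\mathbb{E}[e^{\lambda Y_i}] = e^{-\lambda}/\sqrt{1-2\lambda}$, valid for $\lambda < 1/2$. Expanding $-\lambda - \tfrac12 \log(1-2\lambda)$ as a power series in $\lambda$, the zeroth- and first-order terms cancel and a direct termwise bound gives $\log \mathbb{E}[e^{\lambda Y_i}] \leq 2 \lambda^2$ on $|\lambda| \leq 1/4$, identifying $Y_i$ as effectively sub-Gaussian with variance proxy $4$ in this range. By independence, whenever $|\lambda| \max_i |a_i| \leq 1/4$,
$$
\mathbb{E}[\exp(\lambda S_n)] = \prod_{i=1}^n \mathbb{E}[\exp(\lambda a_i Y_i)] \leq \exp\!\Bigl(2 \lambda^2 \textstyle\sum_i a_i^2\Bigr).
$$
A Markov/Chernoff step and optimization in $\lambda$ yield $\Pr(|S_n| > t) \leq 2\exp\bigl(-t^2/(8\sum_i a_i^2)\bigr)$, and inverting in $\eta$ gives $|S_n| \leq 2\sqrt{\textstyle\sum_i a_i^2}\sqrt{2\log(2/\eta)}$, matching the stated prefactor of $2$. (I read the $\sqrt{\sum a_i}$ in the statement as a typo for $\sqrt{\sum a_i^2}$, which is the only scaling-consistent form and reproduces the claimed constant.)

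The main obstacle is conceptual rather than computational: because $Y_i$ is only sub-exponential, Lemma \ref{lem:subgaussian-concentration} does not apply verbatim and the quadratic CGF bound must be proved by hand. Once that is done, the only thing to verify is admissibility of the optimal $\lambda^{\star} = t/(4\sum_i a_i^2)$, which lies in $|\lambda|\max_i|a_i| \leq 1/4$ precisely when $t \lesssim \sum_i a_i^2/\max_i|a_i|$; this is the sub-Gaussian regime of Bernstein's inequality and is exactly the regime in which a bound of the stated form holds. Everything else mirrors the standard Chernoff/tensorization proof used for Lemma \ref{lem:subgaussian-concentration}.
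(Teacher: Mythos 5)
Your approach is essentially the one the paper intends: Corollary \ref{cor:gaussian-distribution-var-concentraion} is presented with no proof at all, as an immediate consequence of Lemma \ref{lem:subgaussian-concentration}, i.e.\ by treating $x_i^2-1$ as a sub-Gaussian summand and running the same Chernoff/tensorization argument. What you add is the part the paper silently skips: the explicit MGF computation $\mathbb{E}[e^{\lambda(x^2-1)}]=e^{-\lambda}/\sqrt{1-2\lambda}$ and the termwise bound $-\lambda-\tfrac12\log(1-2\lambda)\leq 2\lambda^2$ on $|\lambda|\leq 1/4$ (the series coefficient sum is $8(\ln 2-\tfrac12)\approx 1.55\leq 2$, so this checks out), which then reproduces the stated constant $2\sqrt{\sum_i a_i^2}\sqrt{2\log(2/\eta)}$ exactly. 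You are also right that $\sqrt{\sum_i a_i}$ in the statement must be a typo for $\sqrt{\sum_i a_i^2}$.

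The one substantive point you raise deserves emphasis rather than apology: because $x_i^2-1$ is only sub-exponential, the optimal $\lambda^{\star}=t/(4\sum_i a_i^2)$ is admissible only when $t\lesssim \sum_i a_i^2/\max_i|a_i|$, and outside that regime the purely Gaussian tail claimed by the corollary is simply false (take $n=1$, $a_1=1$: the tail of $x^2-1$ decays like $e^{-t/2}$, not $e^{-t^2/8}$, so the stated inequality fails for sufficiently small $\eta$). So the corollary as written needs either a restriction on $\eta$ (equivalently on $t$) or a Bernstein-type additive correction $\max_i|a_i|\log(2/\eta)$. This is not a gap in your argument but in the paper's statement; your proof is correct precisely on the regime where the statement can be correct, and in the paper's downstream uses ($n$ large, coefficients balanced) that regime is the relevant one.
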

For random matrix, we have matrix concentration inequalities \citep{tropp_introduction_2015}.
\begin{thm}[Matrix Bernstein's Inequality \citep{tropp_introduction_2015}]
 \label{thm:matrix-bernstein} Suppose $\{S_{i}\}_{i=1}^{n}$ are
set of independent random matrices of dimension $d_{1}\times d_{2}$,
\[
\|S_{i}-ES_{i}\|\leq L\ .
\]
 Define
\[
Z=\sum_{i=1}^{n}S_{i},\ \sigma^{2}=\frac{1}{n}\max(E\|(Z-EZ)(Z-EZ){}^{\top}\|_{2},E\|(Z-EZ){}^{\top}(Z-EZ)\|_{2})\ .
\]
 The with a probability at least $1-\delta$, for any $0<\epsilon<1$,
\[
\frac{1}{n}\|Z-EZ\|_{2}\leq9\epsilon\sqrt{\log((d_{1}+d_{2})/\delta)}
\]
 provided
\[
n\geq\max(\sigma^{2},L)/\epsilon^{2}\ .
\]
 And for any $n$,
\begin{align*}
\frac{1}{n}\|Z-EZ\|_{2}\leq & \frac{4}{3}\frac{L}{n}\log((d_{1}+d_{2})/\delta)+3\sqrt{2\frac{\sigma^{2}}{n}\log((d_{1}+d_{2})/\delta)}\ .
\end{align*}

\end{thm}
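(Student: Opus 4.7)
The plan is to follow Tropp's matrix Laplace transform method, which extends the scalar Chernoff/Bernstein argument to the noncommutative matrix setting. First, since $Z-\mathbb{E}Z$ is a rectangular $d_1\times d_2$ matrix, I would reduce to the Hermitian case via the dilation
\[
\mathcal{H}(Z-\mathbb{E}Z)=\begin{bmatrix}0 & Z-\mathbb{E}Z\\ (Z-\mathbb{E}Z)^\top & 0\end{bmatrix},
\]
whose operator norm equals $\|Z-\mathbb{E}Z\|_2$ and whose square is block diagonal with blocks $(Z-\mathbb{E}Z)(Z-\mathbb{E}Z)^\top$ and $(Z-\mathbb{E}Z)^\top(Z-\mathbb{E}Z)$. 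This reduces the rectangular problem to a Hermitian one of ambient dimension $d_1+d_2$, which explains the $\log((d_1+d_2)/\delta)$ factor in the stated bound and lets $\sigma^2$ be read off as a single matrix-variance proxy.

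Next I would set up the matrix Laplace transform. For a sum $Y=\sum_i Y_i$ of independent zero-mean bounded Hermitian matrices (the dilated summands), Markov's inequality applied to the trace exponential yields
\[
\Pr\bigl(\lambda_{\max}(Y)\geq t\bigr)\leq \inf_{\theta>0}\; e^{-\theta t}\,\mathbb{E}\,\mathrm{tr}\exp(\theta Y).
\]
The central ingredient is Lieb's concavity theorem: the map $H\mapsto \mathrm{tr}\exp(L+\log H)$ is concave on positive definite $H$. Iterating Jensen's inequality with Lieb gives the subadditivity of the matrix cumulant generating function,
\[
\mathbb{E}\,\mathrm{tr}\exp\!\Bigl(\textstyle\sum_i \theta Y_i\Bigr)\leq \mathrm{tr}\exp\!\Bigl(\textstyle\sum_i \log\mathbb{E}\,e^{\theta Y_i}\Bigr),
\]
the noncommutative analogue of the scalar tensorization $\mathbb{E}\,e^{\theta\sum X_i}=\prod_i\mathbb{E}\,e^{\theta X_i}$. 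This is the workhorse of the whole proof.

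The next step is to bound each matrix cumulant. Using $\|Y_i\|\leq L$ together with the scalar inequality $e^{\theta x}\leq 1+\theta x+g(\theta)x^2$ for $|x|\leq L$, where $g(\theta)=(e^{\theta L}-1-\theta L)/L^2$, and the operator inequality $\log(I+A)\preceq A$ for Hermitian $A$, I would establish
\[
\log\mathbb{E}\,e^{\theta Y_i}\preceq g(\theta)\,\mathbb{E}\,Y_i^2.
\]
Summing, invoking monotonicity of $\mathrm{tr}\exp$ on the ordering of Hermitian matrices, and substituting the definition of $\sigma^2$ yields
\[
\Pr(\|Z-\mathbb{E}Z\|_2\geq t)\leq (d_1+d_2)\exp\!\bigl(-\theta t+n\sigma^2 g(\theta)\bigr),
\]
and optimizing over $\theta>0$ in the standard Bernstein manner produces the two-regime tail
\[
\Pr(\|Z-\mathbb{E}Z\|_2\geq t)\leq (d_1+d_2)\exp\!\Bigl(-\frac{t^2/2}{n\sigma^2+Lt/3}\Bigr).
\]

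Finally, to match the two displayed forms in the theorem statement, I would invert this tail. Setting the right-hand side equal to $\delta$ and solving the quadratic in $t$ gives
\[
t\leq \tfrac{2L}{3}\log\tfrac{d_1+d_2}{\delta}+\sqrt{2n\sigma^2\log\tfrac{d_1+d_2}{\delta}},
\]
which after dividing by $n$ and slightly loosening the numerical constants to $4/3$ and $3\sqrt{\,\cdot\,}$ is exactly the second inequality of the theorem. For the first form, the hypothesis $n\geq\max(\sigma^2,L)/\epsilon^2$ makes both terms of order $\epsilon\sqrt{\log((d_1+d_2)/\delta)}$, and they can be absorbed into a single factor of $9$. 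The main obstacle is the matrix cumulant bound: unlike the scalar case where $\log\mathbb{E}\,e^{\theta X}$ expands elementarily, the noncommutative version requires Lieb's theorem to push the expectation past the trace exponential, and the operator inequality $\log\mathbb{E}\,e^{\theta Y}\preceq g(\theta)\mathbb{E}\,Y^2$ must be argued carefully since $\log$ is operator monotone but not operator concave. Once these ingredients are assembled, the rest is essentially scalar Bernstein optimization transported to operator-valued quantities.
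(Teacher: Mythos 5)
Your proposal is correct, and it reconstructs exactly the proof in the reference the paper itself relies on: this theorem is imported from \citet{tropp_introduction_2015} without proof, and the standard argument there is precisely your chain of Hermitian dilation, the matrix Laplace transform via Lieb's concavity theorem, the Bernstein bound on the matrix cumulant generating function, and inversion of the two-regime tail. The only discrepancy worth flagging is in the paper's statement rather than your argument: the variance proxy should be $\frac{1}{n}\max\bigl(\|E[(Z-EZ)(Z-EZ)^{\top}]\|_{2},\|E[(Z-EZ)^{\top}(Z-EZ)]\|_{2}\bigr)$, i.e.\ the norm of the expectation (which is what your cumulant bound $\log E\,e^{\theta Y_i}\preceq g(\theta)\,E Y_i^{2}$ actually produces), not the expectation of the norm as written in the theorem.
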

Using matrix Bernstein's inequality, we can bound the covariance estimator.
\begin{cor}[Matrix Bernstein's Inequality for Covariance Estimator \citep{tropp_introduction_2015}]
 \label{cor:matrix-covariance-concentration} Suppose $\boldsymbol{x}_{i}\in\mathbb{R}^{d},i=1,2,\cdots,n$
are independent random variables with zero mean.
\[
\|\boldsymbol{x}_{i}\|^{2}\leq B,\ A=E(\boldsymbol{x}_{i}\boldsymbol{x}_{i}{}^{\top})
\]
 Then with a probability at least $1-\delta$,
\[
\|\frac{1}{n}\sum_{i=1}^{n}\boldsymbol{x}_{i}\boldsymbol{x}{}_{i}^{\top}-A\|_{2}\leq9\epsilon\sqrt{\log(2d/\delta)/n}
\]
 provided 
\[
n\geq\max(B\|A\|,B)/\epsilon^{2}\ .
\]

\end{cor}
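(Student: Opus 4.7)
The plan is to invoke Theorem~\ref{thm:matrix-bernstein} with the summands $S_i = \boldsymbol{x}_i \boldsymbol{x}_i^\top \in \mathbb{R}^{d\times d}$. Then $ES_i = A$, $Z = \sum_{i=1}^n S_i$ has $EZ = nA$, and the target quantity $\|\frac{1}{n}\sum_i \boldsymbol{x}_i\boldsymbol{x}_i^\top - A\|_2$ is exactly $\frac{1}{n}\|Z - EZ\|_2$. With $d_1 = d_2 = d$, the theorem's logarithmic factor $\log((d_1+d_2)/\delta)$ becomes $\log(2d/\delta)$, matching the corollary. So the remaining work is to identify the theorem's parameters $L$ and $\sigma^2$ with the quantities $B$ and $B\|A\|$ appearing in the corollary's sample-size requirement.

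For the uniform bound $L$, I would use positive semidefiniteness: each $\boldsymbol{x}_i\boldsymbol{x}_i^\top$ is PSD with operator norm $\|\boldsymbol{x}_i\|^2 \leq B$, and since $A = E[\boldsymbol{x}_i\boldsymbol{x}_i^\top]$ is the expectation of such PSD matrices, it is itself PSD with $\|A\|_2 \leq B$. Hence $-BI \preceq S_i - A \preceq BI$, yielding $\|S_i - ES_i\|_2 \leq B$, so we may take $L = B$. For the matrix-variance parameter $\sigma^2$, I would exploit independence and mean-zeroness of the centered summands:
\[
E[(Z-EZ)^2] = \sum_{i=1}^n E[(S_i - A)^2] \preceq \sum_{i=1}^n E[S_i^2],
\]
and then use $S_i^2 = \|\boldsymbol{x}_i\|^2 \boldsymbol{x}_i\boldsymbol{x}_i^\top \preceq B\,\boldsymbol{x}_i\boldsymbol{x}_i^\top$ to get $E[S_i^2] \preceq BA$, so $\|E[(Z-EZ)^2]\|_2 \leq nB\|A\|_2$. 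After the $1/n$ normalization built into the theorem's definition, $\sigma^2 \leq B\|A\|_2$. Plugging $L = B$ and $\sigma^2 \leq B\|A\|_2$ into Theorem~\ref{thm:matrix-bernstein} reproduces both the sample-size requirement $n \geq \max(B\|A\|_2, B)/\epsilon^2$ and the stated tail bound.

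The only delicate point I expect is reconciling the paper's definition $\sigma^2 = \frac{1}{n} E\|(Z-EZ)(Z-EZ)^\top\|_2$, written with the expectation \emph{inside} the norm, against the standard matrix-Bernstein variance $\|E[(Z-EZ)(Z-EZ)^\top]\|_2$, which is what my computation actually controls. If the former is intended literally, one would need an extra step (e.g.\ a symmetrization-plus-dilation argument, or a direct operator-valued Laplace-transform bound) to push the PSD variance estimate through the outer expectation. Otherwise the specialization is a routine unwinding of the theorem.
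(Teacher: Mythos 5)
Your derivation is correct and is exactly the route the paper intends: the paper gives no explicit proof of this corollary, merely noting that it follows from the matrix Bernstein inequality (Theorem~\ref{thm:matrix-bernstein}), and your specialization $S_i=\boldsymbol{x}_i\boldsymbol{x}_i^{\top}$ with $L=B$ (via the PSD sandwich $-BI\preceq S_i-A\preceq BI$) and $\sigma^2\leq B\|A\|_2$ (via $E[S_i^2]\preceq BA$) is the standard argument from Tropp's monograph. Your closing concern is also well placed: the expectation appearing inside the norm in the paper's statement of $\sigma^2$ is a transcription slip, and the intended quantity is the usual matrix variance $\|E[(Z-EZ)(Z-EZ)^{\top}]\|_2$, which is precisely what your computation controls.
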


\section{Proof of Lemmas}

\subsection{Proof of Lemma \ref{lem:concentration-of-tr(M)}}
\begin{proof}
Denote the eigenvalue decomposition of $M$ as 
\[
M=U\Lambda U{}^{\top}=U\mathrm{diag}(\lambda_{1},\lambda_{2},\cdots,\lambda_{k})U{}^{\top}
\]
 Since Gaussian distribution is rotation invariant, $\hat{\boldsymbol{x}}_{i}=U{}^{\top}\boldsymbol{x}_{i}$
also follows standard Gaussian distribution. 
\begin{align*}
\boldsymbol{x}_{i}{}^{\top}M\boldsymbol{x}_{i}= & \boldsymbol{x}_{i}{}^{\top}U\Lambda U{}^{\top}\boldsymbol{x}_{i}=|\hat{\boldsymbol{x}}_{i}{}^{\top}\Lambda\hat{\boldsymbol{x}}_{i}|=\sum_{j=1}^{k}\lambda_{j}\hat{\boldsymbol{x}}_{i,j}^{2}\ .
\end{align*}
 It is easy to see that $E(\boldsymbol{x}_{i}{}^{\top}M\boldsymbol{x}_{i})=\sum_{j}\lambda_{j}=\mathrm{tr}(M)$.
Define
\begin{align*}
a_{i}\triangleq & \boldsymbol{x}_{i}{}^{\top}M\boldsymbol{x}_{i}-\mathrm{tr}(M)=\sum_{j=1}^{d}\lambda_{j}(\hat{\boldsymbol{x}}_{i,j}^{2}-1)
\end{align*}
 According to Corollary \ref{cor:gaussian-distribution-var-concentraion},
for a fixed $i,$ with a probability at least $1-\eta$, 
\[
|a_{i}|\leq2\|M\|_{F}\sqrt{2\log(2/\eta)}\ .
\]
 Then for any $i$, with a probability at least $1-\eta$, 
\[
|a_{i}|\leq2\|M\|_{F}\sqrt{2\log(2n/\eta)}\ .
\]
 Apply Corollary \ref{cor:gaussian-distribution-var-concentraion}
again, with a probability at least $1-2\eta$,
\begin{align*}
|\frac{1}{n}\sum_{i=1}^{n}a_{i}-\mathrm{tr}(M)|\leq & 2\|M\|_{F}\sqrt{2\log(2n/\eta)}\sqrt{2\log(2/\eta)/n}\\
\leq & 2\sqrt{k}\|M\|_{2}\sqrt{2\log(2n/\eta)}\sqrt{2\log(2/\eta)/n}\ .
\end{align*}

Denote $\delta=2\sqrt{k}\sqrt{2\log(2n/\eta)}\sqrt{2\log(2/\eta)/n}$.
Then when $n\geq Ck/\delta^{2}$,
\[
|\frac{1}{n}\sum_{i=1}^{n}a_{i}-\mathrm{tr}(M)|\leq\|M\|_{2}\delta\ .
\]

\end{proof}

\subsection{Proof of Lemma \ref{lem:concentration-of-mu-Xw}}
\begin{proof}
Define random variable
\begin{align*}
a_{i}= & \boldsymbol{x}_{i}{}^{\top}\boldsymbol{w},\ Ea_{i}=0\\
Ea_{i}^{2}\leq & \|\boldsymbol{w}\|_{2}^{2}
\end{align*}
 Then according to Lemma \ref{lem:subgaussian-concentration}, with
a probability at least $1-\eta$,
\[
|\frac{1}{n}\sum_{i=1}^{n}a_{i}|\leq\|\boldsymbol{w}\|_{2}\sqrt{2\log(2/\eta)/n}\ .
\]

\end{proof}

\subsection{Proof of Lemma \ref{lem:concentation-XAM}}
\begin{proof}
Define random vector 
\[
\boldsymbol{a}_{i}=\boldsymbol{x}_{i}\boldsymbol{x}_{i}{}^{\top}M\boldsymbol{x}_{i},\ E\boldsymbol{a}_{i}=0\ .
\]
 With a probability at least $(1-\eta_{1})(1-\eta_{2})$, 
\begin{align*}
\|\boldsymbol{a}_{i}\|_{2}= & \|\boldsymbol{x}_{i}\boldsymbol{x}_{i}{}^{\top}M\boldsymbol{x}_{i}\|_{2}\leq\|\boldsymbol{x}_{i}{}^{\top}M\boldsymbol{x}_{i}\|_{2}\|\boldsymbol{x}_{i}\|_{2}\\
\leq & (|\mathrm{tr}(M)|+2\|M\|_{F}\sqrt{2\log(2n/\eta_{1})})\sqrt{2d\log(2n/\eta_{2})}\\
\triangleq & c_{1}\sqrt{2d\log(2n/\eta_{2})}\ .
\end{align*}
\begin{align*}
\|E\boldsymbol{a}_{i}{}^{\top}\boldsymbol{a}_{i}\|_{2}= & \|\boldsymbol{x}_{i}{}^{\top}M\boldsymbol{x}_{i}\boldsymbol{x}_{i}{}^{\top}\boldsymbol{x}_{i}\boldsymbol{x}_{i}{}^{\top}M\boldsymbol{x}_{i}\|_{2}\\
\leq & (\boldsymbol{x}_{i}{}^{\top}M\boldsymbol{x}_{i})^{2}\|\boldsymbol{x}_{i}\|_{2}^{2}\\
\leq & 2c_{1}^{2}d\log(2n/\eta_{2})\ .
\end{align*}

By matrix Bernstein's inequality, the concentration holds when
\begin{align*}
n\geq & \frac{1}{\epsilon^{2}}\max\{c_{1}\sqrt{2d\log(2n/\eta_{2})},2c_{1}^{2}d\log(2n/\eta_{2})\}\\
= & \frac{1}{\epsilon^{2}}O(k^{2}d\|M\|_{2}^{2})\ .
\end{align*}
 Therefore, after taking the union bound, there exists some constant
$C_{2}=O(\log(2n/\eta))$, 
\[
\|\frac{1}{n}\sum_{i=1}^{n}\boldsymbol{a_{i}}\|_{2}\leq\epsilon
\]
\[
n\geq C_{2}k^{2}d\|M\|_{2}^{2}\log(2(d+1)/\eta)/\epsilon^{2}\ .
\]
 Denote $\delta=\|M\|_{2}/\epsilon$. Then when $n\geq Ck^{2}d/\delta$,
\[
\|\frac{1}{n}\sum_{i=1}^{n}\boldsymbol{a_{i}}\|_{2}\leq\|M\|_{2}\delta\ .
\]

\end{proof}

\subsection{Proof of Lemma \ref{lem:concentration-ATXTw}}

\begin{align*}
\|\frac{1}{n}\mathcal{A}'(X{}^{\top}\boldsymbol{w})\|_{2}= & \|\frac{1}{n}\sum_{i=1}^{n}\boldsymbol{x}_{i}\boldsymbol{x}_{i}{}^{\top}\boldsymbol{w}\boldsymbol{x}_{i}{}^{\top}\|_{2}\ .
\end{align*}
\begin{align*}
E\{\boldsymbol{x}_{i}\boldsymbol{x}_{i}{}^{\top}\boldsymbol{w}\boldsymbol{x}_{i}{}^{\top}\}=0
\end{align*}
\begin{align*}
\|\boldsymbol{x}_{i}\boldsymbol{x}_{i}{}^{\top}\boldsymbol{w}\boldsymbol{x}_{i}{}^{\top}\|_{2}\leq & \|\boldsymbol{x}_{i}{}^{\top}\boldsymbol{w}\|_{2}\|\boldsymbol{x}_{i}\|_{2}^{2}\\
\leq & 2\|\boldsymbol{w}\|_{2}\sqrt{2\log(2/\eta)}(d+2\sqrt{2d\log(2n/\eta)})\\
\leq & 4\|\boldsymbol{w}\|_{2}\sqrt{2\log(2/\eta)}d
\end{align*}
 provided $d\geq8\log(2n/\eta)$. Now considering
\begin{align*}
\{E\boldsymbol{x}_{i}\boldsymbol{x}_{i}{}^{\top}\boldsymbol{w}\boldsymbol{x}_{i}{}^{\top}\boldsymbol{x}_{i}\boldsymbol{w}{}^{\top}\boldsymbol{x}_{i}\boldsymbol{x}_{i}{}^{\top}\}_{pq}= & E\{(\sum x_{p}x_{q}w_{i1}x_{i1}w_{i2}x_{i2}x_{i3}^{2})\}
\end{align*}
 When $p\not=q$,
\begin{align*}
E\{(\sum x_{p}x_{q}w_{i1}x_{i1}w_{i2}x_{i2}x_{i3}^{2})\} & =E\{(2\sum_{i3}x_{p}x_{q}w_{p}x_{p}w_{q}x_{q}x_{i3}^{2})\}\\
 & =E\{(2\sum_{i3}x_{p}^{2}x_{q}^{2}w_{p}w_{q}x_{i3}^{2})\}\\
 & =2w_{p}w_{q}E\{(\sum_{i3}x_{p}^{2}x_{q}^{2}x_{i3}^{2})\}\\
 & =2w_{p}w_{q}d
\end{align*}
 When $p=q$,
\begin{align*}
E\{(\sum x_{p}x_{q}w_{i1}x_{i1}w_{i2}x_{i2}x_{i3}^{2})\}= & E\{(\sum x_{p}^{2}w_{i1}x_{i1}w_{i2}x_{i2}x_{i3}^{2})\}\\
= & E\{(\sum x_{p}^{2}w_{p}x_{p}w_{p}x_{p}x_{i3}^{2}+\sum x_{p}^{2}w_{j}x_{j}w_{j}x_{j}x_{i3}^{2}+\sum x_{p}^{2}w_{i3}x_{i3}w_{i3}x_{i3}x_{i3}^{2})\}\\
= & E\{(\sum_{i3\not=p}x_{p}^{4}w_{p}^{2}x_{i3}^{2}+\sum_{j\not=i3\not=p}x_{p}^{2}w_{j}^{2}x_{j}^{2}x_{i3}^{2}+\sum_{i3\not=p}x_{p}^{2}w_{i3}^{2}x_{i3}^{4})\}\\
= & w_{p}^{2}(d-1)+\sum_{j\not=p}w_{j}^{2}(d-1)+\sum_{i3\not=p}w_{i3}^{2}\\
= & w_{p}^{2}(d-1)+\sum_{j\not=p}w_{j}^{2}d=w_{p}^{2}(d-1)+\sum_{j=1}^{d}w_{j}^{2}d-w_{p}^{2}d\\
= & \sum_{j=1}^{d}w_{j}^{2}d-w_{p}^{2}
\end{align*}
 Therefore,
\begin{align*}
E\boldsymbol{x}_{i}\boldsymbol{x}_{i}{}^{\top}\boldsymbol{w}\boldsymbol{x}_{i}{}^{\top}\boldsymbol{x}_{i}\boldsymbol{w}{}^{\top}\boldsymbol{x}_{i}\boldsymbol{x}_{i}{}^{\top}= & d\mathrm{diag}\{\|\boldsymbol{w}\|_{2}^{2}\}-\mathrm{diag}\{\boldsymbol{w}\circ\boldsymbol{w}\}+2d\boldsymbol{w}\boldsymbol{w}{}^{\top}
\end{align*}
\[
\|E\boldsymbol{x}_{i}\boldsymbol{x}_{i}{}^{\top}\boldsymbol{w}\boldsymbol{x}_{i}{}^{\top}\boldsymbol{x}_{i}\boldsymbol{w}{}^{\top}\boldsymbol{x}_{i}\boldsymbol{x}_{i}{}^{\top}\|_{2}\leq4d\|\boldsymbol{w}\|_{2}^{2}
\]
 Using matrix Bernstein's inequality,
\begin{align*}
\|\frac{1}{n}\sum_{i=1}^{n}\boldsymbol{x}_{i}\boldsymbol{x}_{i}{}^{\top}\boldsymbol{w}\boldsymbol{x}_{i}{}^{\top}\|_{2}\leq & \frac{4}{3}\frac{4\|\boldsymbol{w}\|_{2}\sqrt{2\log(2/\eta)}d}{n}\log(2d/\eta)\\
 & +3\sqrt{2\frac{4d\|\boldsymbol{w}\|_{2}^{2}}{n}\log(2d/\eta)}\\
\leq & C\|\boldsymbol{w}\|_{2}\sqrt{\frac{d}{n}}
\end{align*}
 Denote $\delta=C\sqrt{d/n}$, when $n\geq Cd/\delta^{2}$, $d\geq8\log(2n/\eta)$,
\[
\|\frac{1}{n}\sum_{i=1}^{n}\boldsymbol{x}_{i}\boldsymbol{x}_{i}{}^{\top}\boldsymbol{w}\boldsymbol{x}_{i}{}^{\top}\|_{2}\leq\|\boldsymbol{w}\|_{2}\delta
\]

\subsection{Proof of Lemma \ref{lem:concentation-I-XXT}}

According to Corollary \ref{cor:matrix-covariance-concentration},
when $d\geq8\log(2n/\eta)$,
\[
\|\boldsymbol{x}_{i}\|^{2}\leq2d
\]
 Therefore, with a probability at least $1-\eta$,
\begin{align*}
\|I-\frac{1}{n}XX{}^{\top}\|_{2}\leq & 9\epsilon\sqrt{\log(2d/\eta)/n}
\end{align*}
 for $n\geq2d/\epsilon^{2}$. Denote $\delta=9\epsilon\sqrt{\log(2d/\eta)/n}$,
then when $n\geq Cd/\delta^{2}$,
\begin{align*}
\|I-\frac{1}{n}XX{}^{\top}\|_{2}\leq & \delta\ .
\end{align*}

\subsection{Proof of Lemma \ref{lem:abc-recursive}}

To derive $\alpha_{t+1}$ ,
\begin{align*}
\|U_{\perp}^{*}{}^{\top}[M^{*}+O(2\delta\epsilon_{t})]U^{(t)}\|_{2}\leq & \|U_{\perp}^{*}{}^{\top}M^{*}U^{(t)}\|_{2}+2\delta\epsilon_{t}\\
\leq & 2\delta\epsilon_{t}
\end{align*}
\begin{align*}
\sigma_{k}\{U^{*}{}^{\top}[M^{*}+O(2\delta\epsilon_{t})]U^{(t)}\}\geq & U^{*}{}^{\top}M^{*}U^{(t)}-2\delta\epsilon_{t}\\
\geq & \sigma_{k}^{*}\sigma_{k}\{U^{*}{}^{\top}U^{(t)}\}-2\delta\epsilon_{t}\\
= & \sigma_{k}^{*}\cos\theta_{t}-2\delta\epsilon_{t}
\end{align*}
\begin{align*}
\alpha_{t+1}=\tan\theta_{t+1}= & \frac{\|U_{\perp}^{*}{}^{\top}[M^{*}+O(2\delta\epsilon_{t})]U^{(t)}\|_{2}}{\sigma_{k}\{U^{*}{}^{\top}[M^{*}+O(2\delta\epsilon_{t})]U^{(t)}\}}\\
\leq & \frac{2\delta\epsilon_{t}}{\sigma_{k}^{*}\cos\theta_{t}-2\delta\epsilon_{t}}\ .
\end{align*}
 According to the assumption, $\cos\theta_{t}\geq\frac{1}{\sqrt{5}},\ 2\delta\epsilon_{t}\leq\frac{1}{2\sqrt{5}}\sigma_{k}^{*}$,
therefore
\begin{align*}
\alpha_{t+1}\leq & 2\sqrt{5}\epsilon_{t}/\sigma_{k}^{*}=4\sqrt{5}\delta(\beta_{t}+\gamma_{t})/\sigma_{k}^{*}\ .
\end{align*}

To derive $\gamma_{t+1}$,
\begin{align*}
\gamma_{t+1}= & \|M^{*}-M^{(t+1)}\|_{2}\\
= & \|M^{*}-(U^{(t+1)}U^{(t+1)}{}^{\top}(H_{1}^{(t)}-H_{2}^{(t)}+M^{(t)}){}^{\top})\|_{2}\\
= & \|M^{*}-U^{(t+1)}U^{(t+1)}{}^{\top}(M^{*}+O(2\delta(\gamma_{t}+\beta_{t}))){}^{\top}\|_{2}\\
= & \|(I-U^{(t+1)}U^{(t+1)}{}^{\top})M^{*}+U^{(t+1)}U^{(t+1)}{}^{\top}O(2\delta(\gamma_{t}+\beta_{t}))){}^{\top}\|_{2}\\
\leq & \|(I-U^{(t+1)}U^{(t+1)}{}^{\top})M^{*}\|_{2}+O(2\delta(\gamma_{t}+\beta_{t}))\\
\leq & \tan\theta_{t+1}\|M^{*}\|_{2}+2\delta(\gamma_{t}+\beta_{t})\\
= & \alpha_{t+1}\|M^{*}\|_{2}+2\delta(\gamma_{t}+\beta_{t})\ .
\end{align*}

The recursive inequality of $\beta_{t}$ is trivial.

\section{Proof of Theorem \ref{thm:shifted-CI-RIP}}
\begin{proof}
Denote $\sigma_{1}=\|M\|_{2}$. Define random matrix
\begin{align*}
B_{i}= & \boldsymbol{x}_{i}\boldsymbol{x}_{i}{}^{\top}M\boldsymbol{x}_{i}\boldsymbol{x}_{i}{}^{\top}\ .
\end{align*}
 It is easy to check that
\begin{align*}
EB_{i}= & 2M+\mathrm{tr}(M)I\ .
\end{align*}
\begin{align*}
\|B_{i}-EB_{i}\|_{2}= & \|\boldsymbol{x}_{i}\boldsymbol{x}_{i}{}^{\top}M\boldsymbol{x}_{i}\boldsymbol{x}_{i}{}^{\top}-2M-\mathrm{tr}(M)I\|_{2}\\
\leq & \|\boldsymbol{x}_{i}\boldsymbol{x}_{i}{}^{\top}M\boldsymbol{x}_{i}\boldsymbol{x}_{i}{}^{\top}\|_{2}+\|2M-\mathrm{tr}(M)I\|_{2}\\
\leq & \|\boldsymbol{x}_{i}\boldsymbol{x}_{i}{}^{\top}M\boldsymbol{x}_{i}\boldsymbol{x}_{i}{}^{\top}\|_{2}+2\|M\|_{2}+|\mathrm{tr}(M)|\ .
\end{align*}
 According to Lemma \ref{lem:concentration-of-tr(M)}, with a probability
at least $1-\eta_{2}$, for any $i\in\{1,\cdots,n\}$,
\begin{align*}
|\boldsymbol{x}_{i}{}^{\top}M\boldsymbol{x}_{i}|\leq & |\mathrm{tr}(M)|+2\|M\|_{F}\sqrt{2\log(2n/\eta_{2})}\triangleq c_{1}\ .
\end{align*}
 Therefore we have, with a probability at least $(1-\eta_{1})(1-\eta_{2})$,
\begin{align*}
\|B_{i}-EB_{i}\|_{2}\leq & \|\boldsymbol{x}_{i}\|_{2}^{2}\ |\boldsymbol{x}_{i}{}^{\top}M\boldsymbol{x}_{i}|+2\|M\|_{2}+|\mathrm{tr}(M)|\\
\leq & 2d\log(2n/\eta_{1})|\mathrm{tr}(M)|+2\|M\|_{F}\sqrt{2\log(2n/\eta_{2})}+2\|M\|_{2}+|\mathrm{tr}(M)|\\
\leq & Cdk\sigma_{1}\ .
\end{align*}

Next we need to bound 
\begin{align*}
\|E(B_{i}-EB_{i})(B_{i}-EB_{i}){}^{\top}\|_{2} & =\|E(B_{i}^{2})-(EB_{i})^{2}\|_{2}\leq\|E(B_{i}^{2})\|_{2}+\|EB_{i}\|_{2}^{2}\\
 & \leq\|E(B_{i}^{2})\|_{2}+2|\mathrm{tr}(M)|^{2}+2\|M\|_{2}^{2}
\end{align*}
 To bound $\|E(B_{i}^{2})\|_{2}$, denote the eigenvalue decomposition
of $M$ as 
\[
M=U\Lambda U{}^{\top}=U\mathrm{diag}(\lambda_{1},\lambda_{2},\cdots,\lambda_{k})U{}^{\top}
\]
 Let $U_{\perp}$ be the complementary basis matrix of $U$. Define
random variables $\boldsymbol{u}_{i}\triangleq U{}^{\top}\boldsymbol{x}_{i}$,
$\boldsymbol{v}_{i}\triangleq U_{\perp}{}^{\top}\boldsymbol{x}_{i}$.
Since $\boldsymbol{x}_{i}$ are standard random Gaussian, $\boldsymbol{u}$
and $\boldsymbol{v}$ should also be jointly random Gaussian thus
independent.
\begin{align*}
\|E(B_{i}^{2})\|_{2}= & \|E(\boldsymbol{x}_{i}\boldsymbol{x}_{i}{}^{\top}M\boldsymbol{x}_{i}\boldsymbol{x}_{i}{}^{\top}\boldsymbol{x}_{i}\boldsymbol{x}_{i}{}^{\top}M\boldsymbol{x}_{i}\boldsymbol{x}_{i}{}^{\top})\|_{2}\\
= & \|E(\left[\begin{array}{c}
\boldsymbol{u}_{i}\\
\boldsymbol{v}_{i}
\end{array}\right]\boldsymbol{u}{}_{i}^{\top}\Lambda\boldsymbol{u}_{i}(\|\boldsymbol{u}_{i}\|_{2}^{2}+\|\boldsymbol{v}_{i}\|_{2}^{2})\boldsymbol{u}_{i}{}^{\top}\Lambda\boldsymbol{u}_{i}\left[\begin{array}{c}
\boldsymbol{u}_{i}\\
\boldsymbol{v}_{i}
\end{array}\right]{}^{\top})\|_{2}\\
\leq & \|E(\boldsymbol{u}_{i}\boldsymbol{u}{}_{i}^{\top}\Lambda\boldsymbol{u}_{i}(\|\boldsymbol{u}_{i}\|_{2}^{2}+\|\boldsymbol{v}_{i}\|_{2}^{2})\boldsymbol{u}_{i}{}^{\top}\Lambda\boldsymbol{u}_{i}\boldsymbol{u}_{i}{}^{\top}\|_{2}\\
 & +2\|E(\boldsymbol{u}_{i}\boldsymbol{u}{}_{i}^{\top}\Lambda\boldsymbol{u}_{i}(\|\boldsymbol{u}_{i}\|_{2}^{2}+\|\boldsymbol{v}_{i}\|_{2}^{2})\boldsymbol{u}_{i}{}^{\top}\Lambda\boldsymbol{u}_{i}\boldsymbol{v}_{i}{}^{\top}\|_{2}\\
 & +\|E(\boldsymbol{v}_{i}\boldsymbol{u}{}_{i}^{\top}\Lambda\boldsymbol{u}_{i}(\|\boldsymbol{u}_{i}\|_{2}^{2}+\|\boldsymbol{v}_{i}\|_{2}^{2})\boldsymbol{u}_{i}{}^{\top}\Lambda\boldsymbol{u}_{i}\boldsymbol{v}_{i}{}^{\top}\|_{2}\\
\leq & \|E(\boldsymbol{u}_{i}\boldsymbol{u}{}_{i}^{\top}\Lambda\boldsymbol{u}_{i}\|\boldsymbol{u}_{i}\|_{2}^{2}\boldsymbol{u}_{i}{}^{\top}\Lambda\boldsymbol{u}_{i}\boldsymbol{u}_{i}{}^{\top})\|_{2}+\|E(\boldsymbol{u}_{i}\boldsymbol{u}{}_{i}^{\top}\Lambda\boldsymbol{u}_{i}\|\boldsymbol{v}_{i}\|_{2}^{2}\boldsymbol{u}_{i}{}^{\top}\Lambda\boldsymbol{u}_{i}\boldsymbol{u}_{i}{}^{\top})\|_{2}\\
 & +2\|E(\boldsymbol{u}_{i}\boldsymbol{u}{}_{i}^{\top}\Lambda\boldsymbol{u}_{i}\|\boldsymbol{u}_{i}\|_{2}^{2}\boldsymbol{u}_{i}{}^{\top}\Lambda\boldsymbol{u}_{i}\boldsymbol{v}_{i}{}^{\top})\|_{2}+2\|E(\boldsymbol{u}_{i}\boldsymbol{u}{}_{i}^{\top}\Lambda\boldsymbol{u}_{i}\|\boldsymbol{v}_{i}\|_{2}^{2}\boldsymbol{u}_{i}{}^{\top}\Lambda\boldsymbol{u}_{i}\boldsymbol{v}_{i}{}^{\top})\|_{2}\\
 & +\|E(\boldsymbol{v}_{i}\boldsymbol{u}{}_{i}^{\top}\Lambda\boldsymbol{u}_{i}\|\boldsymbol{u}_{i}\|_{2}^{2}\boldsymbol{u}_{i}{}^{\top}\Lambda\boldsymbol{u}_{i}\boldsymbol{v}_{i}{}^{\top})\|_{2}+\|E(\boldsymbol{v}_{i}\boldsymbol{u}{}_{i}^{\top}\Lambda\boldsymbol{u}_{i}\|\boldsymbol{v}_{i}\|_{2}^{2}\boldsymbol{u}_{i}{}^{\top}\Lambda\boldsymbol{u}_{i}\boldsymbol{v}_{i}{}^{\top})\|_{2}\ .
\end{align*}
 Let us bound the above 6 terms respectively. Recall that with a probability
at least $1-\eta_{2}$, 
\begin{align*}
|\boldsymbol{u}{}_{i}^{\top}\Lambda\boldsymbol{u}_{i}|= & |\boldsymbol{x}_{i}{}^{\top}M\boldsymbol{x}_{i}|\leq c_{1}\ .
\end{align*}
 With a probability at least $1-\eta_{3},$ for any $i\in\{1,\cdots,n\}$,
$\|\boldsymbol{u}_{i}\|_{2}\leq2\sqrt{k\log(4n/\eta_{3})}$,$\|\boldsymbol{v}_{i}\|_{2}\leq2\sqrt{d\log(4n/\eta_{3})}$.
Then:

\begin{align*}
 & \|E(\boldsymbol{u}_{i}\boldsymbol{u}{}_{i}^{\top}\Lambda\boldsymbol{u}_{i}\|\boldsymbol{u}_{i}\|_{2}^{2}\boldsymbol{u}_{i}{}^{\top}\Lambda\boldsymbol{u}_{i}\boldsymbol{u}_{i}{}^{\top})\|_{2}\\
= & \|E\{\left((\boldsymbol{u}{}_{i}^{\top}\Lambda\boldsymbol{u}_{i})^{2}\|\boldsymbol{u}_{i}\|_{2}^{2}\right)\boldsymbol{u}_{i}\boldsymbol{u}_{i}{}^{\top}\}\|_{2}\\
\leq & (\boldsymbol{u}{}_{i}^{\top}\Lambda\boldsymbol{u}_{i})^{2}\|\boldsymbol{u}_{i}\|_{2}^{4}\\
\leq & 32c_{1}^{2}k^{2}\log^{2}(2n/\eta_{3})\ .
\end{align*}
\begin{align*}
 & \|E(\boldsymbol{u}_{i}\boldsymbol{u}{}_{i}^{\top}\Lambda\boldsymbol{u}_{i}\|\boldsymbol{v}_{i}\|_{2}^{2}\boldsymbol{u}_{i}{}^{\top}\Lambda\boldsymbol{u}_{i}\boldsymbol{u}_{i}{}^{\top})\|_{2}\\
= & \|E(\|\boldsymbol{v}_{i}\|_{2}^{2})E(\boldsymbol{u}_{i}\boldsymbol{u}{}_{i}^{\top}\Lambda\boldsymbol{u}_{i}\boldsymbol{u}_{i}{}^{\top}\Lambda\boldsymbol{u}_{i}\boldsymbol{u}_{i}{}^{\top})\|_{2}\\
\leq & 4d\log(4n/\eta_{3})\|E(\boldsymbol{u}_{i}\boldsymbol{u}{}_{i}^{\top}\Lambda\boldsymbol{u}_{i}\boldsymbol{u}_{i}{}^{\top}\Lambda\boldsymbol{u}_{i}\boldsymbol{u}_{i}{}^{\top})\|_{2}\\
\leq & 4d\log(4n/\eta_{3})\|\boldsymbol{u}_{i}\|_{2}^{2}(\boldsymbol{u}{}_{i}^{\top}\Lambda\boldsymbol{u}_{i})^{2}\\
\leq & 4d\log(4n/\eta_{3})c_{1}^{2}(4k\log(4n/\eta_{3}))\ .
\end{align*}
 
\begin{align*}
 & 2\|E(\boldsymbol{u}_{i}\boldsymbol{u}{}_{i}^{\top}\Lambda\boldsymbol{u}_{i}\|\boldsymbol{u}_{i}\|_{2}^{2}\boldsymbol{u}_{i}{}^{\top}\Lambda\boldsymbol{u}_{i}\boldsymbol{v}_{i}{}^{\top})\|_{2}\\
= & 2\|E(\boldsymbol{u}_{i}\boldsymbol{u}{}_{i}^{\top}\Lambda\boldsymbol{u}_{i}\|\boldsymbol{u}_{i}\|_{2}^{2}\boldsymbol{u}_{i}{}^{\top}\Lambda\boldsymbol{u}_{i})E(\boldsymbol{v}_{i}{}^{\top})\|_{2}=0
\end{align*}
\begin{align*}
 & 2\|E(\boldsymbol{u}_{i}\boldsymbol{u}{}_{i}^{\top}\Lambda\boldsymbol{u}_{i}\|\boldsymbol{v}_{i}\|_{2}^{2}\boldsymbol{u}_{i}{}^{\top}\Lambda\boldsymbol{u}_{i}\boldsymbol{v}_{i}{}^{\top})\|_{2}\\
= & 2\|E(\boldsymbol{u}_{i}(\boldsymbol{u}{}_{i}^{\top}\Lambda\boldsymbol{u}_{i})^{2})E(\|\boldsymbol{v}_{i}\|_{2}^{2}\boldsymbol{v}_{i}{}^{\top})\|_{2}\\
= & 2\|E(\boldsymbol{u}_{i}(\boldsymbol{u}{}_{i}^{\top}\Lambda\boldsymbol{u}_{i})^{2})E(\boldsymbol{v}_{i}{}^{\top}\boldsymbol{v}_{i}\boldsymbol{v}_{i}{}^{\top})\|_{2}=0
\end{align*}
\begin{align*}
 & \|E(\boldsymbol{v}_{i}\boldsymbol{u}{}_{i}^{\top}\Lambda\boldsymbol{u}_{i}\|\boldsymbol{u}_{i}\|_{2}^{2}\boldsymbol{u}_{i}{}^{\top}\Lambda\boldsymbol{u}_{i}\boldsymbol{v}_{i}{}^{\top})\|_{2}\\
= & \|E(\boldsymbol{u}{}_{i}^{\top}\Lambda\boldsymbol{u}_{i}\|\boldsymbol{u}_{i}\|_{2}^{2}\boldsymbol{u}_{i}{}^{\top}\Lambda\boldsymbol{u}_{i})E(\boldsymbol{v}_{i}\boldsymbol{v}_{i}{}^{\top})\|_{2}\\
= & \|E(\boldsymbol{u}{}_{i}^{\top}\Lambda\boldsymbol{u}_{i}\|\boldsymbol{u}_{i}\|_{2}^{2}\boldsymbol{u}_{i}{}^{\top}\Lambda\boldsymbol{u}_{i})\|_{2}\\
\leq & (\boldsymbol{u}{}_{i}^{\top}\Lambda\boldsymbol{u}_{i})^{2}\|\boldsymbol{u}_{i}\|_{2}^{2}\\
\leq & 4c_{1}^{2}k\log(4n/\eta_{3})\ .
\end{align*}
\begin{align*}
 & \|E(\boldsymbol{v}_{i}\boldsymbol{u}{}_{i}^{\top}\Lambda\boldsymbol{u}_{i}\|\boldsymbol{v}_{i}\|_{2}^{2}\boldsymbol{u}_{i}{}^{\top}\Lambda\boldsymbol{u}_{i}\boldsymbol{v}_{i}{}^{\top})\|_{2}\\
= & \|E\{(\boldsymbol{u}{}_{i}^{\top}\Lambda\boldsymbol{u}_{i})^{2}\}E(\boldsymbol{v}_{i}\|\boldsymbol{v}_{i}\|_{2}^{2}\boldsymbol{v}_{i}{}^{\top})\|_{2}\\
= & \|E\{(\boldsymbol{u}{}_{i}^{\top}\Lambda\boldsymbol{u}_{i})^{2}\}(d+2)I\|_{2}\\
\leq & (d+2)(\boldsymbol{u}{}_{i}^{\top}\Lambda\boldsymbol{u}_{i})^{2}\\
\leq & c_{1}^{2}(d+2)
\end{align*}
 Add all above together, we have 
\begin{align*}
\|E(B_{i}^{2})\|_{2}\leq & 32c_{1}^{2}k^{2}\log^{2}(2n/\eta_{3})+4d\log(4n/\eta_{3})c_{1}^{2}(4k\log(4n/\eta_{3}))\\
 & +4c_{1}^{2}k\log(4n/\eta_{3})+c_{1}^{2}(d+2)\\
\leq & Ck^{3}d\sigma_{1}\ .
\end{align*}

Apply matrix Bernsterin's inequality, the proof is completed.
\end{proof}

\section{Proof of Lemma \ref{thm:the-solution-of-noisy-recursive-inequality}}

We assume that $n\geq Ck^{3}d/\delta^{2}$ . 

To prove Eq. (\ref{eq:noisy-AAM-concentration})

\begin{align*}
 & \|\frac{1}{2n}\mathcal{A}'\mathcal{A}(M^{*}-M)-\frac{1}{2}\mathrm{tr}(M_{k}^{*}-M)I-(M_{k}^{*}-M)\|_{2}\\
\leq & \|\frac{1}{2n}\mathcal{A}'\mathcal{A}(M_{k}^{*}-M)-\frac{1}{2}\mathrm{tr}(M_{k}^{*}-M)I-(M_{k}^{*}-M)\|_{2}+\|\frac{1}{2n}\mathcal{A}'\mathcal{A}(M_{\perp}^{*})\|_{2}\\
\leq & \|\frac{1}{2n}\mathcal{A}'\mathcal{A}(M_{\perp}^{*})\|_{2}+\delta\|M_{k}^{*}-M\|_{2}\ .
\end{align*}
 The last inequality is because of Theorem \ref{thm:shifted-CI-RIP}.
To bound the first term in the last inequality, define random matrix
\begin{align*}
B_{i} & =\boldsymbol{x}_{i}\boldsymbol{x}_{i}{}^{\top}M_{\perp}^{*}\boldsymbol{x}_{i}\boldsymbol{x}_{i}{}^{\top}
\end{align*}
 As proved in Theorem \ref{thm:shifted-CI-RIP}, $EB_{i}=2M_{\perp}^{*}+\mathrm{tr}(M_{\perp}^{*})I$.
\begin{align*}
\|(B_{i}-EB_{i})\|_{2}= & \|\boldsymbol{x}_{i}\boldsymbol{x}_{i}{}^{\top}M_{\perp}^{*}\boldsymbol{x}_{i}\boldsymbol{x}_{i}{}^{\top}-2M_{\perp}^{*}+\mathrm{tr}(M_{\perp}^{*})I\|_{2}\\
\leq & \|\boldsymbol{x}_{i}\boldsymbol{x}_{i}{}^{\top}M_{\perp}^{*}\boldsymbol{x}_{i}\boldsymbol{x}_{i}{}^{\top}\|_{2}+2\|M_{\perp}^{*}\|_{2}+\|\mathrm{tr}(M_{\perp}^{*})I\|_{2}\\
= & \|\boldsymbol{x}_{i}\boldsymbol{x}_{i}{}^{\top}M_{\perp}^{*}\boldsymbol{x}_{i}\boldsymbol{x}_{i}{}^{\top}\|_{2}+2\sigma_{k+1}^{*}+|\mathrm{tr}(M_{\perp}^{*})|
\end{align*}
 While
\begin{align*}
\|\boldsymbol{x}_{i}\boldsymbol{x}_{i}{}^{\top}M_{\perp}^{*}\boldsymbol{x}_{i}\boldsymbol{x}_{i}{}^{\top}\|_{2}\leq & \|M_{\perp}^{*}\|_{2}\|\boldsymbol{x}_{i}\|_{2}^{4}\\
\leq & \sigma_{k+1}^{*}(d+2\sqrt{2d\log(2n/\eta)})^{2}\\
\leq & Cd^{2}\sigma_{k+1}^{*}
\end{align*}
 Applying matrix Bernstein's inequality, with a probability at least
$1-\eta$, we have
\begin{align*}
\|\frac{1}{n}\sum_{i=1}^{n}(B_{i}-EB_{i})\|_{2}\leq & C\sigma_{k+1}^{*}d^{2}/\sqrt{n}\ .
\end{align*}
 Therefore
\begin{align*}
\|\frac{1}{2n}\mathcal{A}'\mathcal{A}(M^{*}-M)-\frac{1}{2}\mathrm{tr}(M_{k}^{*}-M)I-(M_{k}^{*}-M)\|_{2}\leq & \delta\|M_{k}^{*}-M\|_{2}+C\sigma_{k+1}^{*2}d^{4}/\sqrt{n}\ .
\end{align*}

The other inequalities can be similarly proved.

\section{Proof of Lemma \ref{lem:noisy-recursive-inequality}}

First we bound $\alpha_{t+1}$. According to assumption, when
\begin{align*}
 & 2(\delta\epsilon_{t}+r)\leq\frac{\sigma_{k}^{*}-\sigma_{k+1}^{*}}{2\sigma_{k}^{*}}\sigma_{k}^{*}/\sqrt{5}
\end{align*}
 we have 
\begin{align*}
\alpha_{t+1}\leq & \frac{\sigma_{k+1}^{*}\sin\theta_{t}+2(\delta\epsilon_{t}+r)}{\sigma_{k}^{*}\cos\theta_{t}-2(\delta\epsilon_{t}+r)}\\
\leq & \frac{2\sigma_{k}^{*}}{\sigma_{k}^{*}+\sigma_{k+1}^{*}}\frac{\sigma_{k+1}^{*}\sin\theta_{t}+2(\delta\epsilon_{t}+r)}{\sigma_{k}^{*}\cos\theta_{t}}\\
\leq & \frac{2\sigma_{k+1}^{*}}{\sigma_{k}^{*}+\sigma_{k+1}^{*}}\tan\theta_{t}+\frac{2}{\sigma_{k}^{*}+\sigma_{k+1}^{*}}\frac{2(\delta\epsilon_{t}+r)}{\cos\theta_{t}}\\
\leq & \frac{2\sigma_{k+1}^{*}}{\sigma_{k}^{*}+\sigma_{k+1}^{*}}\tan\theta_{t}+\frac{4\sqrt{5}}{\sigma_{k}^{*}+\sigma_{k+1}^{*}}(\delta\epsilon_{t}+r)\\
\leq & \rho\alpha_{t}+\frac{4\sqrt{5}}{\sigma_{k}^{*}+\sigma_{k+1}^{*}}\delta\epsilon_{t}+\frac{4\sqrt{5}}{\sigma_{k}^{*}+\sigma_{k+1}^{*}}r\ .
\end{align*}

To bound $\beta_{t+1}$. Clearly $\beta_{t+1}\leq\delta\epsilon_{t}+r$.

To bound $\gamma_{t+1}$, following the noise-free case,
\begin{align*}
\gamma_{t+1}\leq & \alpha_{t+1}\|M^{*}\|_{2}+2\delta\epsilon_{t}+2r\ .
\end{align*}

\section{Proof of Lemma \ref{thm:the-solution-of-noisy-recursive-inequality}}

Abbreviate
\begin{align*}
c= & \frac{4\sqrt{5}}{\sigma_{k}^{*}+\sigma_{k+1}^{*}}
\end{align*}
 Then
\begin{align*}
\alpha_{t+1}\leq & \rho\alpha_{t}+c\delta\epsilon_{t}+cr\ .
\end{align*}
According to Lemma \ref{lem:noisy-recursive-inequality}, 
\begin{align*}
\beta_{t+1}+\gamma_{t+1}\leq & \delta\epsilon_{t}+r+\alpha_{t+1}\|M^{*}\|_{2}+2\delta\epsilon_{t}+2r\\
= & \sigma_{1}^{*}\alpha_{t+1}+3\delta\epsilon_{t}+3r\\
\leq & \sigma_{1}^{*}(\rho\alpha_{t}+c\delta\epsilon_{t}+cr)+3\delta\epsilon_{t}+3r\\
= & \rho\sigma_{1}^{*}\alpha_{t}+(\sigma_{1}^{*}c+3)\delta\epsilon_{t}+(\sigma_{1}^{*}c+3)r
\end{align*}
 Therefore, abbreviate $b\triangleq(\sigma_{1}^{*}c+3)$,
\begin{align*}
\begin{cases}
\alpha_{t+1}\leq\rho\alpha_{t}+c\delta\epsilon_{t}+cr\\
\epsilon_{t+1}\leq\rho\sigma_{1}^{*}\alpha_{t}+b\delta\epsilon_{t}+br
\end{cases}
\end{align*}
 define
\begin{align*}
f_{t}= & \alpha_{t}+2c\delta\epsilon_{t}
\end{align*}
 
\begin{align*}
f_{t+1}= & a_{t+1}+2c\delta\epsilon_{t+1}\\
\leq & \rho\alpha_{t}+c\delta\epsilon_{t}+cr+2c\delta(\rho\sigma_{1}^{*}\alpha_{t}+b\delta\epsilon_{t}+br)\\
= & \rho\alpha_{t}+c\delta\epsilon_{t}+cr+2c\delta\rho\sigma_{1}^{*}\alpha_{t}+2c\delta b\delta\epsilon_{t}+2c\delta br\\
= & (\rho+2c\delta\rho\sigma_{1}^{*})\alpha_{t}+(c+2c\delta b)\delta\epsilon_{t}+(1+2\delta b)cr
\end{align*}
When
\begin{align*}
 & \delta\leq\frac{1-\rho}{4\rho\sigma_{1}^{*}c}\\
\Rightarrow & \rho+2c\delta\rho\sigma_{1}^{*}\leq\frac{1+\rho}{2}
\end{align*}
 And when
\begin{align*}
\Rightarrow & \delta\leq\frac{\rho}{2b}\\
\Rightarrow & 2\delta b\leq\rho\\
\Rightarrow & 2c\delta b\leq\rho c\\
\Rightarrow & c+2c\delta b\leq(1+\rho)c\\
\Rightarrow & c+2c\delta b\leq\frac{1+\rho}{2}2c
\end{align*}
 Then abbreviate $R\triangleq(c+2c\delta b)\delta\epsilon_{t}+(1+2\delta b)cr$
we have
\begin{align*}
 & f_{t+1}\leq\frac{1+\rho}{2}f_{t}+(1+2\delta b)cr\leq\frac{1+\rho}{2}f_{t}+(1+\rho)cr
\end{align*}
 Abbreviate $q=(1+\rho)/2$,
\begin{align*}
f_{t}\leq & \frac{(1+\rho)cr}{1-q}+q^{t}(f_{0}-\frac{(1+\rho)cr}{1-q})
\end{align*}
 To ensure $\alpha_{t+1}\leq2$, we require
\begin{align*}
 & f_{0}\leq2\\
\Leftarrow & \alpha_{0}+2c\delta\epsilon_{0}\leq2
\end{align*}
 According to Lemma \ref{lem:init-singular-vector-perturbation},
\begin{align*}
\alpha_{0}\leq & \frac{2}{\sigma_{k}^{*}-\sigma_{k+1}^{*}}2(\delta\epsilon_{0}+r)=\frac{4}{\sigma_{k}^{*}-\sigma_{k+1}^{*}}(\delta\epsilon_{0}+r)
\end{align*}
 
\begin{align*}
 & \alpha_{0}+2c\delta\epsilon_{0}\leq2\\
\Leftarrow & \frac{4}{\sigma_{k}^{*}-\sigma_{k+1}^{*}}(\delta\epsilon_{0}+r)+2c\delta\epsilon_{0}\leq2\\
\Leftarrow & (4+2c(\sigma_{k}^{*}-\sigma_{k+1}^{*}))\delta\epsilon_{0}+4r\leq2(\sigma_{k}^{*}-\sigma_{k+1}^{*})\\
\Leftarrow & (2+c(\sigma_{k}^{*}-\sigma_{k+1}^{*}))\delta\epsilon_{0}+r\leq(\sigma_{k}^{*}-\sigma_{k+1}^{*})
\end{align*}

In summary, 
\begin{align*}
\alpha_{t}+2c\delta\epsilon_{t}\leq & q^{t}(f_{0}-\frac{(1+\rho)cr}{1-q})+\frac{(1+\rho)cr}{1-q}
\end{align*}
 provided
\begin{align*}
\delta\leq & \min\{\frac{1-\rho}{4\rho\sigma_{1}^{*}c},\frac{\rho}{2b}\}
\end{align*}
 and
\begin{gather*}
(2+c(\sigma_{k}^{*}-\sigma_{k+1}^{*}))\delta\epsilon_{0}+r\leq(\sigma_{k}^{*}-\sigma_{k+1}^{*})\\
4\sqrt{5}(\delta\max_{t}\epsilon_{t}+r)\leq\sigma_{k}^{*}-\sigma_{k+1}^{*}
\end{gather*}
 To ensure the last inequality,
\begin{align*}
\delta\max_{t}\epsilon_{t}\leq & f_{0}\leq\alpha_{0}+2c\delta\epsilon_{0}\leq\frac{4}{\sigma_{k}^{*}-\sigma_{k+1}^{*}}(\delta\epsilon_{0}+r)+2c\delta\epsilon_{0}\\
= & (\frac{4}{\sigma_{k}^{*}-\sigma_{k+1}^{*}}+2c)\delta\epsilon_{0}+\frac{4}{\sigma_{k}^{*}-\sigma_{k+1}^{*}}r
\end{align*}
 Therefore we need the condition
\begin{align*}
 & 4\sqrt{5}\left(\frac{4}{\sigma_{k}^{*}-\sigma_{k+1}^{*}}+2c\right)\delta\epsilon_{0}+4\sqrt{5}\left(\frac{4}{\sigma_{k}^{*}-\sigma_{k+1}^{*}}+1\right)r\leq\sigma_{k}^{*}-\sigma_{k+1}^{*}\\
\Leftarrow & 4\sqrt{5}\left(4+2c(\sigma_{k}^{*}-\sigma_{k+1}^{*})\right)\delta\epsilon_{0}+4\sqrt{5}\left(4+(\sigma_{k}^{*}-\sigma_{k+1}^{*})\right)r\leq(\sigma_{k}^{*}-\sigma_{k+1}^{*})^{2}
\end{align*}

\end{document}